\theoremstyle{definition}
\newtheorem{remark}{Remark}
\newcommand{\spn}{\mathrm{span}}
\definecolor{color1}{HTML}{105e8a}
\definecolor{color2}{HTML}{e99926}
\definecolor{color3}{HTML}{b82a0c}
\definecolor{color4}{HTML}{3e8a10}
\definecolor{color5}{HTML}{80037e}
\newcommand{\Cinfo}{\textbf{C}_{\texttt{info}}}
\newcommand{\Cgood}{\textbf{C}_{\texttt{hit}}}
\newcommand{\Cbad}{\textbf{C}_{\texttt{miss}}}
\newcommand{\boss}{\texttt{BOSS}\xspace}
\newcommand{\PEGE}{\texttt{PEGE}\xspace}
\newcommand{\SeqRepL}{\texttt{SeqRepL}\xspace}
\newcommand{\PEGEoracle}{\texttt{PEGE-oracle}\xspace}
\newcommand{\bossNoOracle}{\texttt{BOSS-no-oracle}\xspace}
\newcommand{\Ber}{\mathrm{Ber}}
\newcommand{\vect}{\mathrm{vec}}
\renewcommand{\emph}[1]{\textit{#1}}
\renewcommand{\paragraph}[1]{\noindent\textbf{#1}}
\title{Beyond Task Diversity: Provable Representation Transfer for Sequential Multi-Task Linear Bandits}
\author{
  Thang Duong \\
  University of Arizona \\
  \texttt{thangduong@arizona.edu} \\
  \And
  Zhi Wang \\
  University of Wisconsin--Madison \\
  \texttt{zhi.wang@wisc.edu} \\
  \And
  Chicheng Zhang \\
  University of Arizona\\
  \texttt{chichengz@cs.arizona.edu}
}
\begin{document}

\maketitle

\begin{abstract}

We study lifelong learning in linear bandits, where a learner interacts with a sequence of linear bandit tasks whose parameters lie in an $m$-dimensional subspace of $\mathbb{R}^d$, thereby sharing a low-rank representation. Current literature typically assumes that the tasks are \textit{diverse}, i.e., their parameters uniformly span the $m$-dimensional subspace. This assumption allows the low-rank representation to be learned before all tasks are revealed, which can be unrealistic in real-world applications. In this work, we present the first nontrivial result for sequential multi-task linear bandits without the task diversity assumption. We develop an algorithm that efficiently learns and transfers low-rank representations. When facing $N$ tasks, each played over $\tau$ rounds, our algorithm achieves a regret guarantee of $\tilde{O}\big (Nm \sqrt{\tau} + N^{\frac{2}{3}} \tau^{\frac{2}{3}} d m^{\frac13} + Nd^2 + \tau m d \big)$ under the ellipsoid action set assumption.
This result can significantly improve upon the baseline of $\tilde{O} \left (Nd \sqrt{\tau}\right)$ that does not leverage the low-rank structure when the number of tasks $N$ is sufficiently large and $m \ll d$. We also demonstrate empirically on synthetic data that our algorithm outperforms baseline algorithms, which rely on the task diversity assumption.

\end{abstract}

\section{Introduction}
\label{sec:intro}

Recommendation systems that interact with customers to promote the best items for each user have been widely adopted around the world. These interactions are often sequential and can be modelled as linear bandit problems~\citep{abe1999associative,dani2008stochastic,rusmevichientong2010linearly,abbasi2011improved}, where the characteristics of items can be represented as context vectors, and 
a user's preference for an item (i.e., reward) can be modelled using a linear combination of the context of the item. 
Even though the problem is typically high-dimensional, different users may exhibit similar preferences, leading to a low-dimensional underlying reward structure.

Motivated by this observation, there has been growing interest in representation learning within the context of linear bandits. For instance, in the item recommendation example, each session of interaction with a user can be seen as a linear bandit task, 
and similarity across tasks can be captured by the existence of a global feature extractor that applies to all problem instances. 

Formally, we consider a problem where the learner sequentially faces $N$ $d$-dimensional linear bandit tasks, each with horizon $\tau$, with a key assumption that the reward predictors of the $N$ tasks, $\theta_1, \ldots, \theta_N$, lie in an $m$-dimensional linear subspace of $\RR^d$. 
The goal of the learner is to minimize their meta (pseudo-)regret, which is the sum of regret across all tasks (see Equation~\eqref{eq:regret} below), by exploiting the shared subspace structure.

One naive approach is to solve each task independently using a base algorithm (such as LinUCB~\citep{abbasi2011improved}
or PEGE~\citep{rusmevichientong2010linearly}); this approach, which we will henceforth refer to as the \emph{individual single-task baseline}), would yield an upper bound on the meta-regret of
0$\tilde{O}(Nd\sqrt{\tau})$. 
On the other hand, had the shared $m$-dimensional subspace 
been known beforehand, one would only need to estimate each reward predictor's projection onto the subspace; this leads to a meta-regret of $\tilde{O}(Nm\sqrt{\tau})$. 
In this work, we focus on the setting where $N$ and $\tau$ are large and $m \ll d$, the regime in which representation transfer learning would be beneficial.

Despite rich results for multi-task linear bandits in the parallel setting~\citep[e.g.][]{yang2020impact,hu2021near,yang2022nearly, cella2023multi}, 
progresses on multi-task bandits in the sequential setting have been relatively sparse. This can be attributed to the additional challenge of \emph{meta-exploration}: 
in addition to exploration in each bandit learning task, 
one also needs to determine when (in which tasks) to acquire more information on the shared $m$-dimensional subspace representation.
This is in contrast to the parallel setting, where algorithms that treat all tasks equally can achieve a near-optimal regret through a reduction to low-rank linear bandits~\citep{hu2021near,jang2021improved}.

Under the assumption that the action sets are well-conditioned ellipsoids, \cite{qin2022non} design an efficient algorithm with a meta-regret of $\tilde{O}\rbr{Nm\sqrt{\tau} + dm \sqrt{\tau N} }$.
However, it relies on an additional key assumption that the tasks are ``diverse'' in the $m$-dimensional subspace: more formally, for any subsequence of tasks $S$, the $m$-th eigenvalue of the task parameters' covariance matrix $\frac{1}{|S|} \sum_{n \in S} \theta_n \theta_n^\top$ is bounded away from zero (see~\cite{tripuraneni2021provable} for a related assumption in the supervised regression setting).
However, this task diversity assumption is hard to verify and may not even hold in practice. Therefore, we raise the question: 
\begin{center}
    \textit{Can we design sequential multi-task bandit algorithms with provable low meta-regret without strong assumptions on task parameters, especially on task diversity?}
\end{center} 

In this paper, we answer this question positively. Under mild assumptions that the action sets are well-conditioned ellipsoids and all task parameters have norms upper and lower bounded by constants, we design an algorithm with a meta-regret of $\tilde{O}\rbr{Nm \sqrt{\tau} + N^{\frac{2}{3}} \tau^{\frac{2}{3}} d m^{\frac13} + Nd^2 + \tau m d}$\footnote{$\tilde{O}$ hides $polylog(\tau, N, d)$ factor}, providing the first nontrivial result for sequential multi-task linear bandit without the task diversity assumption. To the best of our knowledge, prior to our work, no regret bounds better than that of the individual single-task baseline 
(i.e., $o(N d \sqrt{\tau})$) were known for this setting.

Our algorithm, \boss, is based on a reduction to a bandit online subspace selection problem. Specifically, for each new task $n$, our algorithm chooses a subspace, represented by its canonical orthonormal basis $\hat{B}_n$, to approximate the ground-truth subspace $B$ and guide exploration. 
To address the challenge of meta-exploration, as choosing different $\hat{B}_n$'s leads to learning the projections of $\theta_n$ onto different subspaces/directions, our algorithm is designed to randomly meta-explore in some tasks and use them to learn $B$.
Empirically, we demonstrate the effectiveness of our algorithm in a simulated adversarial environment where the task diversity assumption does not hold.

\subsection{Related work}
\label{sec:relwork}

\paragraph{Parallel representation transfer for multi-task linear bandit.} The parallel setting where the learner interacts with $N$ tasks simultaneously in each round was initially studied by \cite{yang2020impact}.
For the finite action setting, under some distributional assumptions on the action set in each round, they provide a total regret lower bound of $\Omega\rbr{N\sqrt{m\tau} +  \sqrt{md\tau N} }$ and an algorithm that matches this up to logarithmic factors. For the infinite action setting, they provide a lower bound for the problem of $\Omega\rbr{Nm\sqrt{\tau} + d \sqrt{m\tau N} }$, which holds even under the task diversity assumption.
Under the same task diversity assumption and in the infinite action set setting\footnote{Although not mentioned explicitly, a careful examination of~\cite{yang2022nearly} shows that its Lemma 4.2 uses the assumption that $\sigma_m\rbr{ (\theta_1, \ldots, \theta_N) } \geq \sqrt{\frac{N}{m}}$, which is a task diversity assumption.}, 
~\cite{yang2022nearly} present an algorithm with a regret guarantee of $\tilde{O}\rbr{Nm\sqrt{\tau} + d \sqrt{m\tau N} }$.

For general action spaces, 
\cite{hu2021near} provide a regret guarantee of $\tilde{O}\rbr{N\sqrt{md\tau} + d \sqrt{\tau N m} }$, albeit using a  computationally inefficient algorithm. They also provide an extension for multi-task linear reinforcement learning under the assumption of low inherent Bellman error.

Unlike previous approaches, \cite{cella2023multi} relaxes the requirement to know the subspace rank $m$ and the need for the task diversity assumption. Assuming that the action sets are finite and drawn from a specific distribution, by using trace-norm regularization for estimating task parameters and taking actions in a greedy fashion, they provide a regret upper bound of $\tilde{O}\rbr{N\sqrt{m\tau}+\sqrt{dm\tau N}}$ that matches the lower bound from \cite{yang2020impact} up to logarithmic factors.

\paragraph{Sequential representation transfer for multi-task linear bandit.} 
Compared with the parallel setting, the sequential setting is more challenging,  
where the learner only interacts with one task at a time; hence, even after many tasks, the reward predictors of the seen tasks may not span the underlying $m$-dimensional subspace. \cite{qin2022non} avoid this challenge by assuming a task diversity assumption, i.e., any large enough subset of tasks span the underlying $m$-dimensional subspace in a well-conditioned manner. With this, they provide a meta-regret guarantee $\tilde{O}\rbr{Nm\sqrt{\tau} + dm \sqrt{\tau N} }$ which nearly matches a lower bound of $\Omega\rbr{Nm\sqrt{\tau} + d \sqrt{m\tau N} }$.  
They also extend their analysis to a nonstationary representation setting
where the global feature extractor can change over segments of tasks.
~\cite{yang2022nearly} study the sequential setting with an additional assumption that $\|\theta_n\| = 1$ for all tasks; however, it appears that there may be a non-trivial oversight in the analysis.\footnote{In the proof of Theorem 5.1 therein, a key inequality, $\|\theta_n - \tilde{\theta}_n\|_2 \leq \|\theta_n \|_2 - \| \tilde{\theta}_n\|_2$, was used (see Equation (21)); however, this inequality does not hold in general.}

\cite{bilaj2024meta} study a related setting where the task parameters are i.i.d. sampled from a distribution with high variances from a 
$m$-dimensional subspace and with low variances in the orthogonal directions. They provide a meta-regret guarantee of at least 
$\tilde{O}\rbr{N\sqrt{\tau} (d-m)\log \rbr{1 + \frac{m }{\lambda_{min}^{\Acal}(d-m)}}}$, where $\lambda_{min}^{\Acal}$ is the smallest eigenvalue of the empirical covariance matrix of the actions taken. Since a linear bandit algorithm is expected to converge to pull the optimal arm at the end;  $\lambda_{min}^{\Acal}$ may be constant when the action space is fixed. Thus, this bound can be as large as $\tilde{O}( N d \sqrt{\tau} )$.
For a quick reference, see Table~\ref{tab:rel_works} for a comparison between our work and most related works. See also Appendix~\ref{appendix:related_works} for further discussions on related work.

\begin{table}[t]
\centering
\begin{tabular}{l|l|l|l}
\toprule[2pt]
Algorithm & \begin{tabular}[c]{@{}l@{}}Prior \\ info.\end{tabular} & \begin{tabular}[c]{@{}l@{}}Task \\ diversity\end{tabular} & Regret guarantee(s) \\  
\toprule[1pt]
\addlinespace[1pt]
\begin{tabular}[c]{@{}l@{}}Indep. PEGE \\ for each task\end{tabular} & None  & No & $\tilde{O}\rbr{Nd\sqrt{\tau} }$ \\ 
\addlinespace[1pt]
\cmidrule[0.1pt](rl){1-4}
\addlinespace[1pt]
\cite{qin2022non} & $m, \tau$ & Yes & \begin{tabular}[c]{@{}l@{}}$\tilde{O}\rbr{Nm\sqrt{\tau} + dm \sqrt{\tau N} }$, $\Omega\rbr{Nm\sqrt{\tau} + d \sqrt{m\tau N} }$\end{tabular} \\ 
\addlinespace[1pt]
\cmidrule[0.1pt](rl){1-4}
\addlinespace[1pt]
\cite{bilaj2024meta} & None & Yes & at least $O\rbr{N\sqrt{\tau} (d-m)\log \rbr{1 + \frac{m }{\lambda_{min}^{\Acal}(d-m)}}}$ \\ 
\addlinespace[1pt]  
\cmidrule[0.1pt](rl){1-4}
\addlinespace[1pt]
\boss (our work)    & $N, m, \tau$ & No &  $\tilde{O}\Big(Nm \sqrt{\tau} + N^{\frac{2}{3}} \tau^{\frac{2}{3}} d m^{\frac13}+ Nd^2 + \tau m d\Big)$  \\  
\addlinespace[2pt]
\bottomrule[1.5pt]
\end{tabular}
\caption{Comparisons of the settings, assumptions, and regret guarantees in this paper and previous works. A more comprehensive comparison is available in Table \ref{tab:rel_works_full} of Appendix~\ref{appendix:related_works}.}
\label{tab:rel_works}
\end{table}

\section{Problem setup}
\label{sec:prelims}

We consider a sequence of linear bandit tasks of the same length $\tau$, described by the task parameters $\theta_1, \cdots, \theta_N \in \RR^d$ chosen by an environment such that they satisfy Assumption \ref{assum:rep}. The learner solves $N$ tasks sequentially. In task $n$, for each  time step $t=1,\ldots,\tau$, 
the learner chooses an action $A_{n,t}$ from the action set $\Acal$ that satisfies Assumption \ref{assum:linear_bandit} and receives a reward $r_{n,t} = A_{n,t}^{\top}\theta_n + \eta_{n,t}$
, where $\eta_{n,t}$ is independent, mean-zero $1$-sub-Gaussian noise. The learner then moves on to task $n+1$ and repeats the same learning process. 

\begin{assum}[Low-rank representation]
    Let $m < d$. For task parameters $\theta_1, \cdots, \theta_N$, there exist (i) a global feature extractor $B \in \RR^{d \times m}$ with orthogonal columns and (ii) vectors $w_1, \ldots, w_N \in \RR^m$, such that $\theta_i = B w_i$ for all $i \in [N]$.
    \label{assum:rep}
\end{assum}

Given a semi-orthonormal matrix $U \in \RR^{d \times m}$ (i.e., $U^\top U = I_m$), denote by $U_\perp \in \RR^{d \times (d-m)}$ a matrix whose columns constitute an orthonormal basis of the orthogonal complement of $\spn(U)$, where we break ties in dictionary order\footnote{As we will see, other tie-breaking mechanisms would not change our algorithm and analysis; see Definition~\ref{def:alpha_cover_B} and the discussion below in Appendix~\ref{appendix:details_algorithm} for more details.}. We also denote the $i$-th column vector of $U$ by $U(i)$.

Following \citet{rusmevichientong2010linearly} and \citep{qin2022non}, we also assume that the action set $\Acal$ is an ellipsoid and the task parameters have bounded $\ell_2$ norms:

\begin{assum}[Linear bandits with ellipsoid action sets]
    \label{assum:linear_bandit}
    The action set $\Acal := \cbr{x \in \RR^d: \; x^{\top}M^{-1}x \leq 1}$ is an ellipsoid, where $M$ is a symmetric, positive definite matrix.
    In addition, there exist constants $\theta_{\min}$ and $\theta_{\max} \leq 1$ such that for all tasks $n \in [N]$, $\theta_{\min} \leq \|\theta_n\|_2 \leq \theta_{\max}$.
\end{assum}

We define the expected
pseudo-regret for task $n$ as $R^n_{\tau} := \tau \cdot \max_{a \in \Acal} a^{\top}\theta_n - \EE \sbr{ \sum_{t=1}^{\tau} A_{n,t}^{\top}\theta_n}$,
and the meta-regret for all $N$ tasks as
\begin{equation}
        R_{\tau} := 
        \sum_{n=1}^N R^n_{\tau}
        = 
        \sum_{n=1}^N \sbr{\tau \cdot \max_{a \in \Acal} a^{\top}\theta_n -  \EE  \sbr{\sum_{t=1}^{\tau} A_{n,t}^{\top}\theta_n}}.
    \label{eq:regret}
\end{equation}

The learner's goal is to sequentially interact with each task in a way that minimizes its meta-regret.

\section{Algorithm}
\label{sec:alg}

\begin{algorithm}[t]
    \caption{Meta-Exploration procedure}
    \begin{algorithmic}[1]
    
    \State \textbf{Input:} Task index $n$, exploration length $\tau_1$ (a multiple of $d$)
    \For{$i \in [d]$}
    \State Let $A_{n, t} = \lambda_0 e_i$ for $t = u(i-1) + 1, \cdots, ui$, where $u = \frac{\tau_1}{d}$  
    \label{alg:meta-exr-0}

    \EndFor
    \For{time step $t \leftarrow 1, \cdots, \tau_1$}
    \State Take action $A_{n, t}$ and receive the reward $r_{n, t}$ \label{alg:meta-exr-1}
    \EndFor

    \State Compute $\hat{\theta}_n := \argmin_{\theta \in \RR^d} \frac{1}{\tau_1} \sum _{t=1}^{\tau_1} \rbr{\inner{A_{n, t}}{\theta} - r_{n, t}}^2$ \label{alg:meta-exr-2}
    
    \For{time step $t \leftarrow \tau_1 +1, \cdots, \tau$}
    \State Take action $A_{n, t} \leftarrow \arg \max_{a \in \Acal} \inner{a}{\hat{\theta}_n}$ \label{alg:meta-exr-3}
    \EndFor
    \end{algorithmic}
    \label{alg:exr_procedure}
\end{algorithm}

\begin{algorithm}[t]
    \caption{Meta-Exploitation procedure}
    \begin{algorithmic}[1]
    \State \textbf{Input:} Task index $n$, exploration length $\tau_2$ (a multiple of $m$), and the subspace orthonormal basis $\hat{B}_n \in \RR^{d \times m}$ 
    
    \For{$i \in [m]$}
    \State Let $A_{n, t} = \lambda_0 \hat{B}_n(i)$ for $t = u(i-1) + 1, \cdots, ui$, where $u = \frac{\tau_2}{m}$ 
    \label{alg:meta-ext-0}
    \EndFor
    \For{time step $t \leftarrow 1, \cdots, \tau_2$}
    \State Take action $A_{n, t}$ and receive the reward $r_{n, t}$ \label{alg:meta-ext-1}
    \EndFor
    \State Compute $\hat{w}_n := \argmin_{w \in \RR^m} \frac{1}{\tau_2} \sum _{t=1}^{\tau_2} \rbr{\inner{A_{n, t}}{\hat{B}_nw} - r_{n, t}}^2$
    \State Let $\hat{\theta}_n := \hat{B}_n\hat{w}_n$ \label{alg:meta-ext-2}
    
    \For{time step $t \leftarrow \tau_2 +1, \cdots, \tau$}
    \State Take action $A_{n, t} \leftarrow \arg \max_{a \in \Acal} \inner{a}{\hat{\theta}_n}$ \label{alg:meta-ext-3}
    \EndFor
    \end{algorithmic}
    \label{alg:ext_procedure}
\end{algorithm}

Unlike in the parallel setting or the sequential setting with a task diversity assumption, here, the learner cannot directly learn the global feature extractor $B$. Instead, it needs to reason with the uncertainty about $B$ learned from the seen tasks.

\paragraph{High-level idea of our approach.} To simultaneously learn $B$ online and utilize our (imperfect) knowledge of it, we solve the sequential multi-task bandit problem using a bi-level approach:

\begin{itemize}[leftmargin=*,itemsep=0pt]
\item At the lower level, for each task $n$, 
the learner has the option of invoking two base algorithms: one performs naive exploration that does not incorporate our knowledge on $B$, using 
a variant of full-dimensional linear bandit algorithm  (PEGE~\citep{rusmevichientong2010linearly}, Algorithm~\ref{alg:exr_procedure}); the other tries to incorporate a learned subspace $\hat{B}$ as prior knowledge to get reduced regret 
(Algorithm~\ref{alg:ext_procedure}).

Algorithm~\ref{alg:exr_procedure} and~\ref{alg:ext_procedure} can be viewed as performing meta-exploration and meta-exploitation respectively: Algorithm~\ref{alg:exr_procedure}, while ignoring the low-dimension property of the tasks, produces unbiased estimators of $\theta_n$ that helps learn the task subspace $B$; 
Algorithm~\ref{alg:ext_procedure} allows the learner to achieve a much lower regret when the subspace $\hat{B}$ approximately contains $\theta_n$; however, using it may slow down the learning of $B$. 

\item At the upper level, the learner has two decisions to make for each task $n$: (1) 
choosing between meta-exploration and meta-exploitation; 
(2) choosing a subspace $\hat{B}_n$ to use if performing meta-exploitation. To this end, we propose Algorithm~\ref{alg:PMA}, which aims at making these decisions in a feedback-driven way to ensure low meta-regret.

\end{itemize}

We now elaborate on each level in more detail.

\paragraph{The lower level.} As mentioned above, Algorithm~\ref{alg:exr_procedure} is for meta-exploration. 
When invoked in task $n$, 
it can achieve two goals simultaneously: 
obtaining an unbiased estimate of $\theta_n$,  
while maintaining a reasonable regret guarantee for task $n$. 
For the first $\tau_1$ steps (line  \ref{alg:meta-exr-0} to \ref{alg:meta-exr-1}), the learner takes actions  $\cbr{\lambda_0 e_i}_{i=1}^d$ that span the action space $\Acal$, where $e_i$ is the $i$-th canonical vector of $\RR^d$ and $\lambda_0 = \sqrt{\lambda_{\min}(M)}$ 
is a constant factor that ensures $\lambda_0 e_i \in \Acal$ (recall Assumption~\ref{assum:linear_bandit}).

Then, the learner estimates task parameter $\hat{\theta}_n$ in line \ref{alg:meta-exr-2} and acts greedily for the rest of the task (line \ref{alg:meta-exr-3}). We summarize its guarantee (originally due to~\cite{rusmevichientong2010linearly}) as follows:

\begin{restatable}{lemma}{cinfo}\label{lem:exr_procedure}
Fix $\tau_1$ to be a multiple of $d$. 
Suppose Algorithm~\ref{alg:exr_procedure} is run on task $n$ with the exploration length $\tau_1$. Then, there exists some constants $c_1, c_2 > 0$ (that depend on $\lambda_0, \theta_{\max}, \theta_{\min}$, and $M$) such that:
\begin{enumerate}[leftmargin=*]
    \item The regret on task $n$ is bounded as $R_\tau^n \leq c_1 \cdot \rbr{ \tau_1 + \tau \cdot \frac{d^2}{\tau_1} } =: \Cinfo$; 
    \item With probability $\geq 1-\delta$, $\| \hat{\theta}_n - \theta_n \| \leq c_2 \cdot \rbr{d \sqrt{ \frac{\ln\frac{d}{\delta}}{\tau_1} }} =: \alpha$. 

\end{enumerate}
\end{restatable}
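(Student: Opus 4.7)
The plan is to prove the two parts separately: first the estimator concentration (Part 2), which is a straightforward least-squares calculation thanks to the specific action schedule; then the regret bound (Part 1), which splits into an easy ``exploration'' phase and an ``exploitation'' phase whose analysis follows the PEGE argument of \citet{rusmevichientong2010linearly} for ellipsoid action sets.

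For Part 2, I would observe that because the algorithm plays each action $\lambda_0 e_i$ exactly $u = \tau_1 / d$ times, the empirical design matrix satisfies $V_n := \sum_{t=1}^{\tau_1} A_{n,t} A_{n,t}^\top = \frac{\lambda_0^2 \tau_1}{d} I_d$. The least-squares solution then reduces to
\[
\hat{\theta}_n - \theta_n \;=\; V_n^{-1} \sum_{t=1}^{\tau_1} A_{n,t}\, \eta_{n,t} \;=\; \frac{d}{\lambda_0^2 \tau_1}\sum_{t=1}^{\tau_1} A_{n,t}\,\eta_{n,t},
\]
so coordinate $i$ equals $\frac{d}{\lambda_0 \tau_1}\sum_{t \in \mathrm{block}(i)} \eta_{n,t}$, a sum of $u$ independent $1$-sub-Gaussian noises. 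Hoeffding's inequality gives $|(\hat{\theta}_n-\theta_n)_i|\leq \frac{1}{\lambda_0}\sqrt{\frac{2d\,\ln(2d/\delta)}{\tau_1}}$ with probability $1-\delta/d$; a union bound over the $d$ coordinates yields $\|\hat{\theta}_n-\theta_n\|_2 \leq \frac{1}{\lambda_0}\sqrt{\frac{2d^2 \ln(2d/\delta)}{\tau_1}}$, which has the claimed form.

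For Part 1, I would split the task horizon into the $\tau_1$ exploration rounds and the $\tau - \tau_1$ exploitation rounds. During exploration, since $\|A_{n,t}\|_2 \leq \lambda_0 \leq \sqrt{\lambda_{\max}(M)}$ and $\|\theta_n\|_2 \leq \theta_{\max}$, the per-round instantaneous regret is $O(1)$ (depending only on $M$ and $\theta_{\max}$), contributing $O(\tau_1)$ total. For exploitation, I would use the closed-form optimizer over the ellipsoid, $a^*(\theta) = M\theta/\sqrt{\theta^\top M\theta}$ with value $\sqrt{\theta^\top M \theta}$, and show that the greedy action $a^*(\hat{\theta}_n)$ satisfies an instantaneous regret bound
\[
\bigl(a^*(\theta_n) - a^*(\hat{\theta}_n)\bigr)^\top \theta_n \;\leq\; C \,\|\hat{\theta}_n - \theta_n\|_2^2,
\]
where $C$ depends on $\lambda_{\min}(M)$, $\lambda_{\max}(M)$, and $\theta_{\min}$. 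This quadratic-in-error bound is the standard Taylor expansion of $\theta \mapsto \sqrt{\theta^\top M \theta}$ around $\theta_n$, where the crucial use of $\|\theta_n\|_2 \geq \theta_{\min}$ controls the Hessian and prevents blow-up of the denominator $\sqrt{\hat{\theta}_n^\top M \hat{\theta}_n}$. Taking expectations and using $\EE\bigl[\|\hat{\theta}_n-\theta_n\|_2^2\bigr] \leq \sum_i \mathrm{Var}\bigl((\hat{\theta}_n-\theta_n)_i\bigr) = d \cdot \frac{d}{\lambda_0^2 \tau_1} = \frac{d^2}{\lambda_0^2 \tau_1}$ yields an exploitation regret of $O\!\bigl(\tau \cdot d^2/\tau_1\bigr)$, which combined with the exploration bound gives the claimed $c_1(\tau_1 + \tau d^2/\tau_1)$.

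The main obstacle is establishing the quadratic instantaneous regret bound for the exploitation phase. A naive first-order argument gives only a linear-in-error bound, which would yield a suboptimal regret scaling as $\tau d/\sqrt{\tau_1}$ instead of $\tau d^2/\tau_1$. The right argument requires exploiting the smoothness of $\theta \mapsto \sqrt{\theta^\top M \theta}$ at $\theta_n$, and in particular noting that the first-order term vanishes by the optimality of $a^*(\theta_n)$. The lower bound $\|\theta_n\|_2 \geq \theta_{\min}$ is essential here, since without it the curvature of the square-root function is unbounded; a small technical subtlety is that one must also argue that $\hat{\theta}_n$ stays in a region where $\|\hat{\theta}_n\|_2$ is bounded below (which can be handled by noting the quadratic bound still holds after accounting for the event where the estimate is too small via the $O(1)$ trivial regret bound).
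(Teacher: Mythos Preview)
Your proposal is correct and follows essentially the same route as the paper: the paper also computes the OLS error via the identity $\sum_t A_{n,t}A_{n,t}^\top = u\lambda_0^2 I_d$ and applies coordinate-wise sub-Gaussian concentration plus a union bound for Part~2, and for Part~1 splits into exploration ($O(\tau_1)$) plus exploitation, invoking the quadratic instantaneous-regret bound you describe (which the paper simply cites as \citep[Lemma~17]{yang2020impact}) together with $\EE\|\hat\theta_n-\theta_n\|^2 \leq c_0 d^2/\tau_1$. One small simplification: that cited lemma gives $\max_{a}\langle a - a^*(\hat\theta_n),\theta_n\rangle \leq J\|\theta_n-\hat\theta_n\|^2/\|\theta_n\|$ with $\|\theta_n\|$ (not $\|\hat\theta_n\|$) in the denominator, so your worry about controlling $\|\hat\theta_n\|$ from below is unnecessary.
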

We defer the proof of this lemma to Appendix~\ref{appendix:reg_exploration}. 
Lemma~\ref{lem:exr_procedure} reveals a tradeoff between meta-exploration and regret minimization for task $n$: if $\tau_1$ is larger (say, closer to $\tau$), $\hat{\theta}_n$ estimates $\theta_n$ more accurately; however, this may yield a worse bound on $R_\tau^n$. 

On the other hand, Algorithm~\ref{alg:ext_procedure} is for meta-exploitation. It takes a subspace (represented by its orthonormal basis $\hat{B}$) as input, and when invoked in task $n$, it can achieve a lower regret guarantee than Algorithm~\ref{alg:exr_procedure} when the subspace contains vectors that closely approximate $\theta_n$. 
Instead of exploring $\RR^d$, the learner only explores the subspace induced by $\hat{B}_n$ (lines \ref{alg:meta-ext-0} and  \ref{alg:meta-ext-1}). Then, the learner estimates the low-dimensional task parameter $\hat{w}_n$ in line \ref{alg:meta-ext-2} and acts greedily for the rest of the task (line \ref{alg:meta-ext-3}).
We summarize its guarantee (originally due to~\cite{yang2020impact}) as follows:

\begin{restatable}{lemma}{chit}
\label{lem:ext_procedure}
Fix $\tau_2$ to be a multiple of $m$. Suppose Algorithm~\ref{alg:ext_procedure} is run on task $n$ with input subspace $\hat{B}_n$ and the exploration length $\tau_2$. Then, there exists some constant $c > 0$ (that depends on $\lambda_0, \theta_{\max}, \theta_{\min}$, and $M$), such that
the regret on task $n$ is bounded as: 
\[
R_\tau^n \leq c \cdot \rbr{ \tau_2 + \tau \cdot  \rbr{ \frac{m^2}{\tau_2} + \| \hat{B}_{n,\perp}^\top \theta_n \|_2^2} }.
\]
Specifically, if $\|\hat{B}_{n,\perp}^\top \theta_n\|_2 \leq 2\alpha$, then 
$
R_\tau^n \leq 4c \rbr{ \tau_2 + \tau \cdot  \rbr{ \frac{m^2}{\tau_2} + \alpha^2 } }$, where $\alpha$ is defined in Lemma \ref{lem:exr_procedure}. 

\end{restatable}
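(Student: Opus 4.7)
I would split $R_\tau^n$ into two parts: the regret from the first $\tau_2$ exploration rounds and the regret from the remaining $\tau-\tau_2$ greedy rounds. Under Assumption~\ref{assum:linear_bandit}, every feasible action and every $\theta_n$ have norms bounded by constants depending only on $\lambda_0,\theta_{\max},M$, so per-round regret is uniformly bounded; hence the exploration phase contributes at most $O(\tau_2)$ to the regret. The remainder of the argument is devoted to bounding the greedy-phase regret.

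For the greedy phase, I would invoke the standard quadratic-regret smoothness property for greedy actions on an ellipsoid action set (cf.~\cite{rusmevichientong2010linearly}): using convexity and local smoothness of $\theta\mapsto\sqrt{\theta^\top M\theta}$ away from the origin, one obtains
\[
\max_{a\in\Acal}\inner{a}{\theta_n}-\inner{a^*(\hat\theta_n)}{\theta_n}\;\le\;c'\,\|\hat\theta_n-\theta_n\|_2^2,
\]
where $a^*(\theta):=\arg\max_{a\in\Acal}\inner{a}{\theta}$ and $c'$ depends only on $\lambda_0,\theta_{\min},M$. So it suffices to control $\EE\|\hat\theta_n-\theta_n\|_2^2$.

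Writing $\theta_n=\hat B_n\hat B_n^\top\theta_n+\hat B_{n,\perp}\hat B_{n,\perp}^\top\theta_n$ and $\hat\theta_n=\hat B_n\hat w_n$, the difference is
\[
\hat\theta_n-\theta_n\;=\;\hat B_n\bigl(\hat w_n-\hat B_n^\top\theta_n\bigr)\;-\;\hat B_{n,\perp}\hat B_{n,\perp}^\top\theta_n,
\]
and since the column spans of $\hat B_n$ and $\hat B_{n,\perp}$ are orthogonal, Pythagoras gives the exact identity
\[
\|\hat\theta_n-\theta_n\|_2^2\;=\;\|\hat w_n-\hat B_n^\top\theta_n\|_2^2\;+\;\|\hat B_{n,\perp}^\top\theta_n\|_2^2.
\]
The second summand is exactly the misspecification term in the claim. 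For the first, each exploration action $\lambda_0\hat B_n(i)$ produces reward $r_{n,t}=\lambda_0(\hat B_n^\top\theta_n)_i+\eta_{n,t}$, so the least-squares estimator $\hat w_n$ decouples coordinate-wise into $m$ scalar empirical means, each over $u=\tau_2/m$ i.i.d.\ $1$-sub-Gaussian samples; a direct variance computation gives $\EE\|\hat w_n-\hat B_n^\top\theta_n\|_2^2=O(m^2/\tau_2)$. Combining the exploration and greedy phases yields the main bound, and the ``specifically'' part then follows by plugging $\|\hat B_{n,\perp}^\top\theta_n\|_2\le 2\alpha$ into the first inequality.

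The one delicate point I anticipate is that the quadratic-smoothness inequality only holds when $\|\hat\theta_n\|$ is bounded away from zero, since $\theta\mapsto\sqrt{\theta^\top M\theta}$ is nonsmooth at the origin. I would handle this via a case split: on the high-probability event $\{\|\hat w_n-\hat B_n^\top\theta_n\|_2\le\theta_{\min}/2\}$ (controlled by a sub-Gaussian tail for the given $\tau_2$), $\|\hat\theta_n\|$ inherits a positive lower bound from $\|\theta_n\|\ge\theta_{\min}$ and the smoothness argument applies; on the complementary low-probability event, the trivial per-round regret bound of $O(1)$ combines with the exponentially small tail to absorb into the constant $c$.
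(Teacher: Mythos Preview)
Your approach is essentially the paper's: split into exploration and greedy phases, invoke the ellipsoid quadratic-smoothness bound on the greedy action, and control $\EE\|\hat\theta_n-\theta_n\|^2$ via the orthogonal decomposition into the in-subspace estimation error plus the misspecification $\|\hat B_{n,\perp}^\top\theta_n\|^2$. The paper packages the last step as a separate ``subspace-informed estimation'' lemma, but the content is your Pythagoras computation.

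There is one small gap in your final case split. You assert that on $\{\|\hat w_n-\hat B_n^\top\theta_n\|_2\le\theta_{\min}/2\}$ the norm $\|\hat\theta_n\|$ inherits a positive lower bound from $\|\theta_n\|\ge\theta_{\min}$; but $\|\hat\theta_n\|=\|\hat w_n\|$ is close to $\|\hat B_n^\top\theta_n\|$, not to $\|\theta_n\|$, and when $\hat B_n$ is badly misaligned with $\theta_n$ the former can be arbitrarily small. Two easy fixes: (i) add one more case --- when $\|\hat B_{n,\perp}^\top\theta_n\|\ge\theta_{\min}/2$ the claimed bound is trivial since its right-hand side is $\Omega(\tau)$, and otherwise $\|\hat B_n^\top\theta_n\|^2=\|\theta_n\|^2-\|\hat B_{n,\perp}^\top\theta_n\|^2\ge 3\theta_{\min}^2/4$ and your argument goes through; or (ii) do what the paper does and cite the version of the smoothness inequality with $\|\theta_n\|$ in the denominator (\cite[Lemma~17]{yang2020impact}), namely $\max_{a\in\Acal}\inner{a-a^*(\hat\theta_n)}{\theta_n}\le J\|\theta_n-\hat\theta_n\|^2/\|\theta_n\|$, which is bounded by $J\|\theta_n-\hat\theta_n\|^2/\theta_{\min}$ directly and removes the need for any case split.
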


Lemma~\ref{lem:ext_procedure} reveals the opportunistic nature of Algorithm~\ref{alg:ext_procedure}:
if $\theta_n$ is perfectly contained in the subspace spanned by $\hat{B}_n$, $\| \hat{B}_{n,\perp}^\top \theta_n \| = 0$ and the regret bound is $R_\tau^n \leq O\rbr{ \tau_2 + \tau \cdot \frac{m^2}{\tau_2} }$, which can be as low as $O(m \sqrt{\tau})$; on the other extreme, $\| \hat{B}_{n,\perp}^\top \theta_n \|$ can be as large as $\| \theta_n \|$ in the worst case, which yields a trivial linear regret bound. Thus, its regret guarantee hinges on good choices of subspace $\hat{B}_n$ as input. See Appendix~\ref{app:meta-exploitation} for the proof of Lemma~\ref{lem:ext_procedure}.

\paragraph{The upper level.} For the upper level, we propose Algorithm \ref{alg:PMA} that decides (1) when to perform meta-exploration and (2) the subspace to use if performing meta-exploitation.

\begin{algorithm}[tb]
    \caption{\boss: Bandit Online Subspace Selection for Sequential Multitask Linear Bandits. The full algorithm \ref{alg:PMA_full} is in Appendix \ref{appendix:related_works}.
    }
    \label{alg:PMA}
    
    \begin{algorithmic}[1]
    \State {\bfseries Input:} Task length $\tau$, number of task $N$, task dimension $d$, subspace dimension $m$, and exploration rate $p$.
    \State {\bfseries Initialize:} An uniform distribution $D_0$ over the set of experts $\Ecal^{\varepsilon}$ in Definition \ref{def:cover3}
    
    \For{$n \in [N]$:}
    \State Randomly draw $\hat{B}_n$ from $D_n$
    \State With probability $p$: $Z_n =1$, otherwise $Z_n =0$
    
    \If{$Z_n=1$
    }\label{alg:boss-line-exr}
    \State Exploration procedure in Algorithm \ref{alg:exr_procedure} 
    \State Update the distribution $D_{n+1}$ with the EWA algorithm~\citep{freund1997decision},
    with learning rate $\eta = \ln 2$

    \Else \label{alg:boss-line-ext}
    \State Exploitation procedure in Algorithm \ref{alg:ext_procedure} with $\hat{B}_n$
    \EndIf
    \EndFor
    \end{algorithmic}
\end{algorithm}

For (1), for each task, the learner chooses to explore the subspace with probability $p$ (line \ref{alg:boss-line-exr}) or exploit with the online subspace estimate $\hat{B}_n$ (line \ref{alg:boss-line-ext}). 

For (2),  
we propose to choose $\cbr{\hat{B}_n}_{n=1}^N$ 
that can optimize the following cost function online\footnote{In the case split of the definition of $C_n(B)$, the value of $\| B_\perp^\top \theta_n \|_2$ is independent of the choice of $B_\perp$. See Definition~\ref{def:alpha_cover_B} and the discussion below in Appendix~\ref{appendix:details_algorithm} for more details.}: 
\begin{equation}
\label{eqn:C_n_ideal}
C_n(B) := 
\begin{cases}
\Cgood :=  \tau_2 + \tau \cdot  \rbr{ \frac{m^2}{\tau_2} + \alpha^2 }  & 
\| {B}_{\perp}^\top \theta_n \|_2 \leq 2\alpha;
\\
\Cbad := \tau & \text{otherwise}. 
\end{cases}
\end{equation}

The motivation behind this definition of $C_n$ is as follows:
according to Lemma~\ref{lem:ext_procedure}, 
$C_n(\hat{B}_n)$ is (up to constant) an upper bound of the regret of the learner at task $n$, were the learner to invoke Algorithm~\ref{alg:ext_procedure} using $\hat{B}_n$ for this task. 
Therefore, 
if we can guarantee $\sum_{n=1}^N C_n(\hat{B}_n)$ to be small, then using Algorithm~\ref{alg:ext_procedure} for all tasks yields a small meta-regret. 

An immediate challenge in directly optimizing $C_n$ defined in Eq.~\eqref{eqn:C_n_ideal} is its dependence on unobserved quantity $\theta_n$. 
This challenge is further complicated by the following: 
(i) at best, we observe $\hat{\theta}_n$'s that are $\alpha$-approximations of $\theta_n$ (e.g. in those meta-exploration tasks, see Lemma~\ref{lem:exr_procedure}); (ii) in meta-exploitation tasks, we do not have guarantees on how close $\hat{\theta}_n$ is to $\theta_n$; note the difference in the definitions of $\hat{\theta}_n$ in meta-exploration and meta-exploitation tasks, respectively.

To address the challenge (i), we propose to optimize the following surrogate cost function for $C_n$: 
\begin{equation}
\label{eqn:surrogate_cost}
\tilde{C}_n(B)
:= 
\begin{cases}
\Cgood & \| {B}_\perp^\top \hat{\theta}_n \| \leq \alpha; \\
\Cbad & \text{otherwise}. 
\end{cases}
\end{equation}
In Lemma \ref{lem:C_n_upper_bound} of Appendix~\ref{appendix:details_algorithm}, we show that, with high probability, $\tilde{C}_n$ is an upper bound of $C_n$ (when $\hat{\theta}_n$ is close to $\theta_n$ as guaranteed by Lemma~\ref{lem:exr_procedure} in meta-exploration rounds).

With this modification of the optimization objective,  
challenge (ii) persists: $\tilde{C}_n$ is only a valid high-probability upper bound of $C_n$ for all meta-exploration rounds $n$; to ensure this upper bound property, 
we propose to optimize the cost: 
\begin{equation}
\label{eqn:final_cost}
\bar{C}_n(B) := \tilde{C}_n(B) \cdot \frac{Z_n}{p},
\end{equation}
where we introduce the importance weighting multiplier $\frac{Z_n}{p}$. Our key observation is that, 
although $\bar{C}_n(B)$ is no longer an upper bound of $C_n(B)$, the upper bound property holds \emph{in-expectation}; to see this, observe that for any fixed $B$, 
\[
\EE_{Z_n} \sbr{ \bar{C}_n(B) } 
= 
\EE_{Z_n} \sbr{ \tilde{C}_n(B) \cdot \frac{Z_n}{p} } 
\gtrsim
\EE_{Z_n} \sbr{ C_n(B) \cdot \frac{Z_n}{p} } 
= 
C_n(B),
\]
here, the first equality is from the definition of $\bar{C}_n(B)$;
the inequality (here, $\gtrsim$ indicates greater than up to a negligible constant) uses the above-mentioned property that when $Z_n = 1$, with high probability, $\tilde{C}_n(B) \geq C_n(B)$ (see Lemma~\ref{lem:exr_procedure} and Lemma~\ref{lem:C_n_upper_bound}); the second equality is from the fact that $Z_n \sim \Ber(p)$.

Following the online learning literature, we propose to use the exponential weight algorithm (EWA, 
 aka Hedge)~\citep{freund1997decision}
to optimize $\cbr{ \bar{C}_n(B) }$ online. 
Ideally, we would like to run EWA with the $\Bcal = \cbr{B: B \in \RR^{d \times m} \text{ s.t. } B^{\top}B = I_m}$, the set of all $m$-dimensional subspaces; however, this is impossible because $|\Bcal|$ is infinite. So instead, we propose to run EWA with the expert set defined as an $\varepsilon$-cover of $\Bcal$:

\begin{definition}
\label{def:cover3}
$\Ecal^{\varepsilon}$ is said to be a  $\varepsilon$-cover over the set of $\Bcal$
in the principal angle sense, if:
\[
\forall \; B \in \Bcal, \; \exists B' \in \Ecal^{\varepsilon} \; \text{such that } \|B^{\top}_{\perp}B'\|_{\mathrm{F}} \leq \varepsilon.
\]
\end{definition}

Definition~\ref{def:cover3} is motivated by the well-known fact that $\|B^{\top}_{\perp}B'\|_{\mathrm{F}}$ is the Frobenius norm of the sine of the principal angle matrix between subspaces spanned by $B$ and $B'$.
In Appendix~\ref{sec:expert-set} we show how to construct $\Ecal^\varepsilon$ and its size $|\Ecal^\varepsilon| \leq (\sqrt{dm}/\varepsilon)^{O\rbr{dm} }$. In subsequent discussions, we will assume that \boss uses such an $\Ecal^\varepsilon$.

Define a constant shift and  scaling of $\bar{C}_n$, $\ell_n(B) := \frac{p}{\Cbad} \sbr{\bar{C}_n(B) - \Cgood \frac{Z_n}{p}}$; we note that any regret guarantee over sequence $\cbr{\ell_n}$ immediately translates to a regret guarantee over sequence $\cbr{\bar{C}_n}$. 
By the construction of the expert set $\Ecal^{\varepsilon}$, sequence $\cbr{\ell_n}$ is realizable with high probability: 
there exists some $B_\varepsilon \in \Ecal^{\varepsilon}$ such that $\sum_{n=1}^N \ell_n(B_\varepsilon) = 0$; this allows EWA 
to achieve a constant regret guarantee, summarized as follows: 

\begin{restatable}{lemma}{costbound}
    Let $\varepsilon = \alpha = c_2 d \sqrt{ \frac{\ln\frac{d}{\delta}}{\tau_1} }$ (with $c_2$ defined in  Lemma~\ref{lem:exr_procedure}) and 
    $\delta = \frac{1}{N^2}$, 
    where $c$ is a constant in Lemma \ref{lem:PMA-expert-set-size}. Then, assuming that $\tau \gg d^2$, Algorithm~\ref{alg:PMA} chooses a sequence of subspaces $\cbr{ \hat{B}_n}$ over the expert set $\Ecal^{\varepsilon}$, defined in Definition \ref{def:cover3}, such that:
    \[
    \sum_{n=1}^N  \EE \sbr{C_n(\hat{B}_n) }
    \leq O\rbr{ N \Cgood +  \frac{\Cbad\log |\Ecal^{\varepsilon}|}{p}} =  \tilde{O} \rbr{ N \rbr{ \tau_2 + \tau \cdot  \rbr{ \frac{m^2}{\tau_2} + \alpha^2 }   } + \frac{\tau d m}{p}}.
    \]
    \label{lem:PMA1}
\end{restatable}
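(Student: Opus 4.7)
The plan is to reduce the regret bound on $\sum_n C_n(\hat{B}_n)$ to a regret bound for EWA on the clipped, importance-weighted loss sequence $\{\ell_n\}$, and in turn to a \emph{realizable} online learning problem over the expert set $\Ecal^{\varepsilon}$. I would proceed in three steps: (i) produce a single expert $B_{\varepsilon} \in \Ecal^{\varepsilon}$ that, with high probability, suffers zero cumulative $\ell_n$-loss; (ii) invoke the standard Halving-style analysis of EWA with learning rate $\eta = \ln 2$ on realizable, $[0,1]$-valued losses; (iii) translate the resulting bound on $\EE \sum_n \bar{C}_n(\hat{B}_n)$ into a bound on $\sum_n \EE[C_n(\hat{B}_n)]$ by a conditional-expectation/importance-weighting calculation.

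For (i), I would use the cover property of Definition~\ref{def:cover3}: there exists $B_{\varepsilon} \in \Ecal^{\varepsilon}$ with $\|B_{\perp}^{\top} B_{\varepsilon}\|_{\mathrm{F}} \leq \varepsilon$; the symmetry of principal angles gives $\|B_{\varepsilon,\perp}^{\top} B\|_{\mathrm{F}} \leq \varepsilon$, and since $\theta_n = B w_n$ with $\|w_n\|_2 \leq \theta_{\max} \leq 1$, this forces $\|B_{\varepsilon,\perp}^{\top}\theta_n\|_2 \leq \varepsilon = \alpha$. Applying Lemma~\ref{lem:exr_procedure} with the prescribed $\delta = 1/N^2$ and union-bounding over the at most $N$ meta-exploration rounds yields $\|\hat{\theta}_n - \theta_n\|_2 \leq \alpha$ for every such round with probability at least $1-1/N$. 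On this event, combining the triangle inequality with Lemma~\ref{lem:C_n_upper_bound} shows that $\tilde{C}_n(B_{\varepsilon}) = \Cgood$ on every exploration round, hence $\ell_n(B_{\varepsilon}) = 0$ for all $n$. For (ii), the normalization $\ell_n(B) = (Z_n/\Cbad)(\tilde{C}_n(B) - \Cgood)$ together with $\tilde{C}_n \in \{\Cgood,\Cbad\}$ puts $\ell_n$ in $[0,1]$, so the realizable EWA regret bound $\sum_n \ell_n(\hat{B}_n) \leq O(\log|\Ecal^{\varepsilon}|)$ applies on the good event.

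For (iii), I would expand $\bar{C}_n(B) = \Cgood Z_n/p + (\Cbad/p)\ell_n(B)$ and take expectation over $Z_n \sim \Ber(p)$ (using $\EE[Z_n/p] = 1$) to get $\EE\sum_n \bar{C}_n(\hat{B}_n) \leq N\Cgood + (\Cbad/p)\cdot O(\log|\Ecal^{\varepsilon}|)$, with the $1/N$ failure event contributing an $O(\Cbad/N)$ term that is absorbed. To pass from $\bar{C}_n$ back to $C_n$, I would condition on the history prior to task $n$ so that $\hat{B}_n$ is deterministic, observe that in an exploration round the surrogate is computed from the exploration estimator produced by Algorithm~\ref{alg:exr_procedure}, and invoke Lemma~\ref{lem:C_n_upper_bound} to conclude that $\EE_{Z_n}[\bar{C}_n(\hat{B}_n)] \geq C_n(\hat{B}_n)$ on the good event. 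Summing over $n$, substituting $|\Ecal^{\varepsilon}| \leq (\sqrt{dm}/\varepsilon)^{O(dm)}$ with $\varepsilon = \alpha$, and simplifying $\log|\Ecal^{\varepsilon}| = \tilde{O}(dm)$ yields the claimed bound. The main obstacle will be the careful bookkeeping of three independent sources of randomness — the meta-exploration Bernoullis $Z_n$, the sub-Gaussian noise in the exploration estimators $\hat{\theta}_n$, and the EWA randomization — and making sure that the choice $\varepsilon = \alpha$ produces compatible thresholds in the $\tilde{C}_n$-versus-$C_n$ comparison, a slack that Lemma~\ref{lem:C_n_upper_bound} is designed to absorb.
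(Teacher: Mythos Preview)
Your proposal is correct and mirrors the paper's proof almost step for step: the paper likewise (i) fixes a covering element $B_\varepsilon \in \Ecal^{\varepsilon}$ and argues that on the high-probability event $F=\bigcap_n\{Z_n=0 \text{ or } \|\hat\theta_n-\theta_n\|\le\alpha\}$ one has $\sum_n \ell_n(B_\varepsilon)=0$, (ii) plugs this realizability into the Freund--Schapire EWA bound with $\eta=\ln 2$ to obtain $\EE\sum_n \ell_n(\hat B_n)\le O(\log|\Ecal^{\varepsilon}|)$, and (iii) unwinds the affine transformation $\ell_n\mapsto\bar C_n$ and uses Lemma~\ref{lem:C_n_upper_bound} plus $P(F_n^c)\le p\delta$ to pass from $\bar C_n$ to $C_n$. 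The only cosmetic difference is that the paper carries the failure event via indicator decomposition $\II(F)+\II(F^c)$ throughout rather than conditioning, which makes the bookkeeping in your step (iii) slightly cleaner; also note that in (i) the lemma you cite for $\tilde C_n(B_\varepsilon)=\Cgood$ is really the triangle-inequality content of Remark~\ref{remark} rather than Lemma~\ref{lem:C_n_upper_bound} itself.
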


Note that the cumulative cost bound of $\cbr{\hat{B}_n}$ has two terms: 
the benchmark term $N \Cgood$ and the regret bound term $\Cbad \frac{\log |\Ecal^{\varepsilon}|}{p}$. The benchmark term represents the best-case regret bound one can achieve if, for all task $n$, the chosen subspace $\hat{B}_n$ can well approximate $\theta_n$ (i.e. $\| \hat{B}_{n,\perp}^\top \hat{\theta}_n \| \leq \alpha$)
On the other hand, the regret-bound term
depends on a few important factors: first, the cost decreases when the meta-exploration probability $p$ increases -- this matches our intuition that a larger $p$ gives more frequent feedback to learn about $\theta_n$, allowing the learned $\hat{B}_n$ to adapt faster to $\theta_n$; second, the cost depends logarithmically on the size of the expert set $\ln|\Ecal^\varepsilon|$, which is standard in the online learning from expert advice literature; third, the cumulative cost depends on the range of the instantaneous costs $\Cbad$.

\begin{remark} 
The subspace selection game in the upper level resembles the partial monitoring problem \citep[see][Chapter 37]{lattimore2020bandit}, where the learner does not directly observe the loss for its actions but receives signals from an environment according to an observation matrix.
Here, in our subspace selection problem, the learner does not directly observe the chosen subspace's cost for most tasks, except when they choose to explore the subspace $B$. Unlike the traditional partial monitoring problem, here, the observation would be $\hat{\theta}_n$, and the cost $C_n(B)$ depends on the chosen subspace $B$ and $\theta_n$.
This means that the cost matrix has an infinite number of columns (one for each $B$)
and the observation depends on the actions of the learner and the environment in a stochastic fashion, unlike a deterministic dependence in the original partial monitoring setting.
\end{remark}

\section{Performance Guarantees}
\label{sec:guarantee}

We bound the meta-regret of Algorithm~\ref{alg:PMA} in Theorem \ref{thm:meta-regret}:

\begin{restatable}{theorem}{regret}
\label{thm:meta-regret}

With exploration probability $p = \min \rbr { \rbr{\frac{2m\sqrt{\tau}}{N}}^{\frac{2}{3}}, 1}$, by choosing 
$\varepsilon = \alpha = c_2 d \sqrt{ \frac{\ln\frac{d}{\delta}}{\tau_1} }$ (with $c_2$ defined in  Lemma~\ref{lem:exr_procedure})
, where $\delta = \frac{1}{N^2}$,   
$\tau_1 = d \cdot \left \lfloor  \min \rbr{d \sqrt{\frac{\tau}{p}}, \tau} / d \right \rfloor $, $\tau_2 = m \cdot \lfloor \sqrt{\tau} \rfloor$, 
the meta-regret of the \boss algorithm is bounded by:
\begin{align}
    R_{\tau} \leq \tilde{O}\rbr{Nm \sqrt{\tau} + N^{\frac{2}{3}} \tau^{\frac{2}{3}} d m^{\frac13} + Nd^2 + \tau m d}.
    \label{eqn:meta-regret-bound}
\end{align}
\end{restatable}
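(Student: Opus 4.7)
The plan is to decompose the meta-regret task by task according to whether task $n$ is a meta-exploration round ($Z_n = 1$) or a meta-exploitation round ($Z_n = 0$), aggregate over the two groups using the three lemmas already established, and finally optimize over the parameters. Concretely, I would write
\[
R_\tau = \sum_{n=1}^N \rbr{ p \cdot \EE\sbr{R_\tau^n \mid Z_n = 1} + (1-p) \cdot \EE\sbr{R_\tau^n \mid Z_n = 0} }
\]
and bound the two conditional expectations separately.

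For the meta-exploration rounds, Lemma~\ref{lem:exr_procedure} immediately yields $\EE\sbr{R_\tau^n \mid Z_n = 1} \leq \Cinfo = c_1\rbr{\tau_1 + \tau d^2/\tau_1}$, contributing $pN\Cinfo$ in total. For the meta-exploitation rounds, Lemma~\ref{lem:ext_procedure} gives $\EE\sbr{R_\tau^n \mid Z_n = 0, \hat{B}_n} \leq O\rbr{C_n(\hat{B}_n)}$: in the good-subspace case ($\|\hat{B}_{n,\perp}^\top \theta_n\| \leq 2\alpha$) the $\Cgood$ branch of $C_n$ is an upper bound, and in the bad-subspace case we fall back on the trivial bound $\tau = \Cbad$. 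Since $\hat{B}_n$ is drawn from $D_n$ independently of $Z_n$, the total exploitation contribution is at most $(1-p)\sum_n \EE\sbr{C_n(\hat{B}_n)} \leq \sum_n \EE\sbr{C_n(\hat{B}_n)}$, which by Lemma~\ref{lem:PMA1} is bounded by $\tilde{O}\rbr{N\Cgood + \tau dm/p}$. Combining gives the master bound
\[
R_\tau \leq \tilde{O}\rbr{pN\Cinfo + N\Cgood + \tau dm/p}.
\]

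The remaining step is to substitute the chosen parameters and simplify through a three-way case split. With $\tau_2 = m\lfloor\sqrt{\tau}\rfloor$, one has $\Cgood = \tilde{O}\rbr{m\sqrt{\tau} + \tau\alpha^2}$ and $\alpha^2 = \tilde{O}(d^2/\tau_1)$, so $N\Cgood$ always contributes the universal $Nm\sqrt{\tau}$ term. When $p = (2m\sqrt{\tau}/N)^{2/3} < 1$ and $d\sqrt{\tau/p} \leq \tau$, the choice $\tau_1 \approx d\sqrt{\tau/p}$ yields $\Cinfo \approx d\sqrt{\tau/p}$ and $\tau\alpha^2 \approx d\sqrt{p\tau}$, so both $pN\Cinfo$ and $\tau dm/p$ balance at $p = (m\sqrt{\tau}/N)^{2/3}$ and give the term $N^{2/3}\tau^{2/3}dm^{1/3}$. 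When $p = (2m\sqrt{\tau}/N)^{2/3}$ but $d\sqrt{\tau/p} > \tau$ (i.e.\ $N$ is so large that $p < d^2/\tau$), then $\tau_1 = \tau$ forces $\Cinfo = O(\tau + d^2)$ and $\tau\alpha^2 = O(d^2)$, and using $p\tau \leq d^2$ one gets $pN\Cinfo + N\tau\alpha^2 = O(Nd^2)$, producing the $Nd^2$ term. Finally, when $p = 1$ (i.e.\ $N \leq 2m\sqrt{\tau}$), every task is a meta-exploration with $\tau_1 \approx d\sqrt{\tau}$ and $\Cinfo = O(d\sqrt{\tau})$, so $N\Cinfo \leq 2md\tau$, producing the $\tau md$ term. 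The main obstacle I foresee is only the careful bookkeeping across these three regimes and the floor operations in $\tau_1,\tau_2$; the conceptual work is already captured by Lemmas~\ref{lem:exr_procedure}, \ref{lem:ext_procedure}, and~\ref{lem:PMA1}, and the rest is straightforward algebra.
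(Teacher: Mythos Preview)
Your proposal is correct and follows essentially the same approach as the paper: the paper's proof sketch performs the identical decomposition $R_\tau = \sum_n \bigl(p\,\EE[\hat R_\tau^n \mid Z_n=1] + (1-p)\,\EE[\hat R_\tau^n \mid Z_n=0]\bigr)$, bounds the first piece by $Np\,\Cinfo$ via Lemma~\ref{lem:exr_procedure}, the second by $\sum_n \EE[C_n(\hat B_n)]$ via Lemma~\ref{lem:ext_procedure}, and then invokes Lemma~\ref{lem:PMA1} to reach the same master bound $\tilde O\bigl(Np\tau_1 + N\tau d^2/\tau_1 + \tau dm/p + N\tau_2 + N\tau m^2/\tau_2\bigr)$ before substituting the stated parameter choices. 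The only cosmetic difference is that the paper absorbs your three regimes into a single line---writing the effect of the $\min$ in $\tau_1$ as $Nd\sqrt{\tau p} + Nd^2$ and the effect of the $\min$ in $p$ as $N^{2/3}\tau^{2/3}dm^{1/3} + \tau m d$---whereas you spell out the case split explicitly; the algebra is the same.
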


In the meta-regret bound~\eqref{eqn:meta-regret-bound}, we view the first two terms as the main terms and the last two as ``burn-in'' terms. 
The first term, $N m \sqrt{\tau}$ is the cumulative regret bound of the oracle baseline, i.e. the idealized algorithm that takes advantage of the extra knowledge of $B$ to achieve a $O(m \sqrt{\tau})$ regret for every task.
The second term, $N^{\frac{2}{3}} \tau^{\frac{2}{3}} d m^{\frac13}$ is the main overhead for learning the representation $B$; it grows sublinearly in $N$, and as a consequence, is dominated by the first term when the number of tasks $N$ is very large (specifically, $N \gg \frac{d^3 \sqrt{\tau}}{m^2}$). 
Compared with multi-task regret bounds in the parallel setting~\citep{yang2020impact,hu2021near}, our dependence on $N$ is admittedly weaker; nevertheless, to our knowledge, Theorem~\ref{thm:meta-regret} is the first nontrivial result in the sequential setting without task diversity assumptions, which has not been studied before in \citep[e.g.,][]{qin2022non}.

For the burn-in terms, the $N d^2$ term can be interpreted as a constant $d^2$ regret overhead per task; the $\tau m d$ term can be interpreted as the learner sacrificing a constant number of tasks ($md$ tasks) to learn a good representation $B$. Observe that, in the less favourable situation where $N < md$ or $\tau < d^2$, the burn-in terms would lead to regret bounds worse than the trivial $N\tau$.

\paragraph{Comparison with the individual single task baseline.} Recall that the individual single-task baseline has a meta-regret of $O(N d \sqrt{\tau})$; our meta-regret guarantee improves over this baseline when 
$\tau \gg d^2$ and $N \gg m \sqrt{\tau}$.
We leave broadening the parameter regimes when our guarantee outperforms the individual single-task baseline as an important open problem.

\paragraph{Comparison with lower bounds.}  \cite{qin2022non} showed a lower bound for the problem: $\Omega\rbr{Nm\sqrt{\tau} + d \sqrt{m\tau N} }$. We can see that there still exists a gap between our upper bound in Theorem \ref{thm:meta-regret} with this, and the gap is bigger than other solutions with task diversity assumption such as \cite{qin2022non}; we speculate that this is a price we pay due to not making any assumptions on task diversity.

We now sketch the proof of Theorem~\ref{thm:meta-regret} below. 
\begin{proof}[Proof sketch]
Denote the pseudo-regret for task $n$ as $\hat{R}_\tau^n := \tau \max_{a \in \Acal} \inner{\theta_n}{a} - \sum_{t=1}^\tau \inner{\theta_n}{A_{n,t}}$. 

We decompose $R_\tau$ as follows: 
\begin{align*}
R_\tau = & \sum_{n=1}^N \EE\sbr{ \hat{R}_\tau^n } = \sum_{n=1}^N \EE\sbr{ \hat{R}_\tau^n Z_n + \hat{R}_\tau^n (1 - Z_n) } \\
= & \sum_{n=1}^N \EE\sbr{ \hat{R}_\tau^n \mid Z_n = 1} \cdot p
+ \sum_{n=1}^N \EE\sbr{ \hat{R}_\tau^n \mid Z_n = 0} \cdot (1-p) \\
\leq & \Cinfo \cdot N p + \sum_{n=1}^N \EE\sbr{ C_n(\hat{B}_n) } \\
= & \tilde{O} \rbr{ \rbr{\tau_1 + \tau \cdot \frac{d^2}{\tau_1}} Np +  
N \rbr{ \tau_2 + \tau \cdot  \rbr{ \frac{m^2}{\tau_2} + \frac{d^2}{\tau_1} }   } + \frac{\tau d m}{p} 
} \\
= & 
\tilde{O} \rbr{ Np\tau_1 + N \tau \frac{d^2}{\tau_1} + \frac{\tau d m}{p} + N \tau_2 + N \tau \frac{m^2}{\tau_2}  }, 
\end{align*}
where the first two equalities are by the definition of $R_\tau$ and algebra; the third equality 
uses the law of total expectation; 
the inequality uses Lemmas~\ref{lem:exr_procedure} and~\ref{lem:ext_procedure} to bound the first and second terms, respectively; 
the fourth equality is due to the definition of $\Cinfo$ and Lemma~\ref{lem:PMA1}; 
the last equality is by algebra. 

The meta-regret of Algorithm~\ref{alg:PMA} follows from the choices of $\tau_1, \tau_2, $ and $p$ -- specifically, $\tau_2$ balances the last two terms, whereas $\tau_1$ and $p$ aims at balancing the first three terms subject to the constraint that $\tau_1 \leq \tau$ and $p \leq 1$ -- see Appendix~\ref{appendix:main-thm-pf} for the remaining details.
\end{proof}

\paragraph{Adaptivity to problem parameters.} Algorithm~\ref{alg:PMA} requires the knowledge of $N$ and $m$, the total number of tasks and the dimensionality of the subspace underlying the task parameters. 
Below, we show that knowledge of $N$ can be relaxed.

We can relax the need to know $N$ by using the doubling trick on \boss.  Specifically, in phase $i$, we can run our algorithm with the assumption that there are $2^i$ total tasks in this phase. The modified algorithm has a meta-regret guarantee that is within a constant of the algorithm that knows $N$. This implicitly gives an adaptive setting of meta-exploration probability $p$ that is decaying over time.

For $m$, the requirement can be relaxed to knowing an upper bound of $m$. Removing this knowledge requires a change of approach, such as low-rank matrix optimization, as in \cite{cella2023multi} or additional assumption, as in \cite{bilaj2024meta}. \cite{cella2023multi} is in the parallel setting, which is not applicable here, and \cite{bilaj2024meta}'s guarantee can be as large as $O(Nd\sqrt{\tau})$ as discussed in section \ref{sec:relwork}.

\section{Experiments}

In this section\footnote{The code for our paper can be found at \url{https://github.com/duongnhatthang/BOSS}}, we compare the performance of our \boss algorithm with the baselines on synthetic environments. The algorithms we evaluate include:

\begin{itemize}[leftmargin=*, itemsep=0pt]
    \item \PEGE: independently solves each task
    using the PEGE algorithm~\citep{rusmevichientong2010linearly}
    \item \PEGEoracle: The ``oracle baseline'' that only uses PEGE on the true subspace $B$, for all tasks
    
    \item \SeqRepL: our implementation of \citep{qin2022non}, in which $\hat{B}_n$ is estimated with SVD and the tasks for meta-exploration are chosen deterministically at round $n = \frac{i(i+1)}{2}$ for $i = 1, 2, \cdots$. 
    \item \bossNoOracle: Algorithm \ref{alg:PMA} with
    $\Ecal^{\varepsilon}$
    set of  
    100,000 experts drawn uniformly at random from $\Bcal$.

    \item 
    \boss: 
    Algorithm \ref{alg:PMA} with
    $\Ecal^{\varepsilon}$
    set as  
    100,000 experts
    drawn uniformly at random from $\Bcal$, plus the ground truth expert $B$. 
    This algorithm is a better approximation of the original \boss (Algorithm~\ref{alg:PMA}) since there exists $B_{\varepsilon} \in \Ecal^{\varepsilon}$ such that $\|(B_\varepsilon)_{\perp}^{\top}B\|_{\mathrm{F}} \leq \varepsilon$. 
\end{itemize}

The setting is $(N, \tau, d, m) = (4000, 500, 10, 3)$. The environment reveals a new subspace dimension at tasks 1, 2501, and 3501. In this experiment, we assume $\Acal$ is a unit sphere, i.e., $M = I_d$.
At each task $n$, denote by $B_n \in \RR^{d \times m_n}$ the subspace basis that the environment used to generate $\theta_n$, where $m_n$ is incremented when $n = 1, 2501, 3501$.
$\theta_n$ is chosen in the following way: $\theta_n = \lambda_1 B_n w_n$ for some $w_n \sim \mathrm{Unif}(\mathbb{S}^{m_n -1})$ and $\lambda_1 \sim \mathrm{Unif}([0.8, 1])$ is a random scaling factor to ensures Assumption \ref{assum:linear_bandit}, where $\theta_{\min}=0.8, \theta_{\max}=1$.

The hyper-parameters $p, \tau_1, \tau_2$, and $\alpha$ of all algorithms, where it applies, are tuned.
The error bands in the figures indicate $\pm 1$ standard deviation computed over 5 independent runs.  

Figure \ref{fig:sfig1} clearly shows the linear dependency of the cumulative regret on $N$. Observe that \boss and its variants outperform both the independent \PEGE and the \SeqRepL baselines. It is also clear that the gap between \bossNoOracle and \boss exists because the expert set $\Bcal^{\varepsilon}$ used in this experiment does not cover the true $B$, since even with $\varepsilon = \frac{d}{\sqrt{\tau_1}} \approx 0.5$, the theoretical size of the expert set is $|\Ecal^{\varepsilon}| = (\sqrt{dm}/\varepsilon)^{dm} \approx 11^{30}$ in this experiment setting which is much larger than the expert set size used in \bossNoOracle.

In Figure~\ref{fig:sfig2}, we plot $\|\hat{B}_{n, \perp}^{\top} B_n\|_{\mathrm{F}}$, 
which measures the closeness of $\hat{B}_{n, \perp}$ to $B_n$. When the environment reveals a new subspace dimension at tasks 1, 2501, and 3501, 
all algorithms' estimation $\hat{B}_n$ require updates and converge after a while. Even though \bossNoOracle has a worse estimation of $\hat{B}_n$ compared to \SeqRepL, it achieves a better regret due to having a better estimation of $\hat{\theta}_n$ as shown in Figure~\ref{fig:sfig3}.

\begin{figure}[h]
\begin{subfigure}{.33\textwidth}
  \centering
  \includegraphics[width=\linewidth]{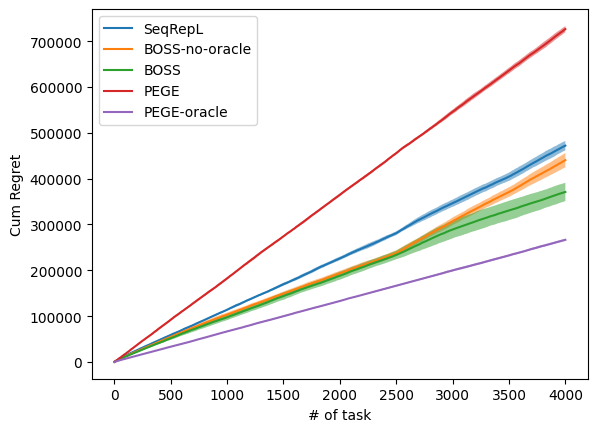}
  \caption{\label{fig:sfig1}}
\end{subfigure}%
\begin{subfigure}{.33\textwidth}
  \centering
  \includegraphics[width=\linewidth]{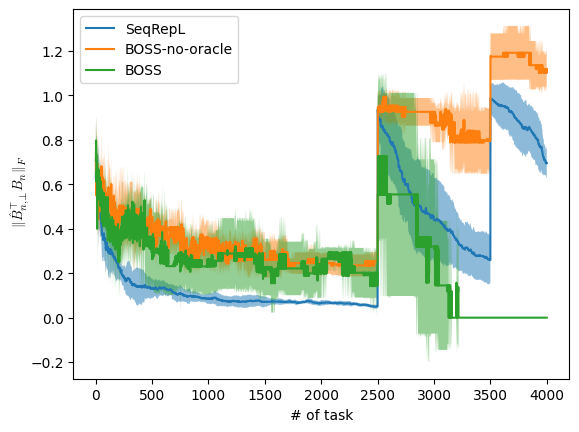}
  \caption{\label{fig:sfig2}}
\end{subfigure}%
\begin{subfigure}{.33\textwidth}
  \centering
  \includegraphics[width=\linewidth]{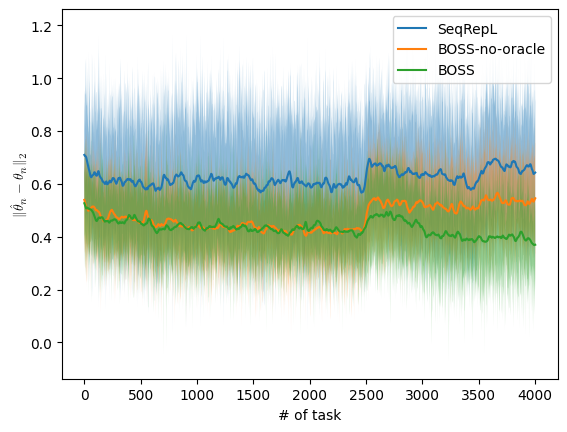}
    \caption{\label{fig:sfig3}}
\end{subfigure}
\caption{Comparing the cumulative regret of \boss and other baselines. The setting is $(N, \tau, d, m) = (4000, 500, 10, 3)$ and $\|\theta_n\|_2 \in [0.8, 1] \; \forall n \in [N]$ chosen uniformly at random from this interval. 
The environment only reveals a new subspace dimension at tasks 1, 2501, and 3501, so there's no task diversity assumption.
}
\label{fig:exp_results}
\end{figure}
\label{sec:experiment}

\section{Discussion and Future Work}

We study the problem of sequential representation transfer in multi-task linear bandit, where the task parameters are allowed to be chosen adversarially online. Our \boss algorithm achieves the regret guarantee of $\tilde{O}\rbr{Nm \sqrt{\tau} + N^{\frac{2}{3}} \tau^{\frac{2}{3}} d m^{\frac13} + Nd^2 + \tau m d}$ without using the task diversity assumption as in previous works. 
This opens up many promising avenues for future work.
Statistically, it would be good to design an algorithm that performs no worse than the individual single-task baseline's performance in all parameter regimes.
In addition, \boss utilizes the special structure of fixed, ellipsoid-shaped action spaces to obtain useful information for meta-exploration, extending the algorithm and guarantees to general 
and time-varying
action spaces is an important direction.
Practically, it would also be nice to design parameter-free variants of \boss that do not require knowing $m$ ahead of time.
Furthermore, \boss requires maintaining an exponentially large number of experts in $\Bcal^{\varepsilon}$; in the future, we would like to develop more computationally-efficient algorithms. 
Lastly, it would be interesting to study relaxations of Assumption \ref{assum:rep} (all task parameters lie exactly in a $m$-dimensional linear subspace), similar to 
\cite{bilaj2024meta} or the changing subspace setting of \cite{qin2022non}. 

\label{sec:discussion}

\bibliographystyle{plainnat}
\bibliography{learning}

\appendix

\newpage

\begin{table}[H]
\centering
\begin{tabular}{l|lll|l}
\toprule[1.5pt]
Algorithm & Setting  & \begin{tabular}[c]{@{}l@{}}Prior \\ knowledge\end{tabular} & \begin{tabular}[c]{@{}l@{}}Task \\ diversity\end{tabular} & Regret bound(s)  \\ \toprule[1pt]
Indep. PEGE & Both & None & No & $\tilde{O}\rbr{Nd\sqrt{\tau} }$ \\ \cmidrule[1pt](rl){1-5}
\citep{hu2021near} & Parallel   & $N, m$ & No & $\tilde{O}\rbr{N\sqrt{md\tau} + d \sqrt{\tau N m} }$ \\ \cmidrule[0.1pt](rl){1-5}
\citep{yang2020impact} & Parallel   & $N, m, \tau$ & Yes & \begin{tabular}[c]{@{}l@{}}$O\rbr{Nm\sqrt{\tau} + d^{\frac{3}{2}}m \sqrt{\tau N} }$, \\ $\Omega\rbr{Nm\sqrt{\tau} + d \sqrt{m\tau N} }$\end{tabular} \\ \cmidrule[0.1pt](rl){1-5}
\citep{yang2022nearly} & Parallel   & $N, m, \tau$ & Yes & $\tilde{O}\rbr{Nm\sqrt{\tau} + d \sqrt{m\tau N} }$ \\ \cmidrule[0.1pt](rl){1-5}
\citep{cella2023multi}  & Parallel   & $|\Acal_t|=K$ & No & \begin{tabular}[c]{@{}l@{}}$\tilde{O}\rbr{N\sqrt{md\tau}+d\sqrt{\tau Nm}}$, \\ $\Omega \rbr{\sqrt{N \tau dm}}$\end{tabular} \\ \cmidrule[1pt](rl){1-5}
\citep{pmlr-v139-jang21a} & Sequential & None & No & $\tilde{O} \rbr{Nd\sqrt{ \tau} + N^{\frac{3}{2}}\sqrt{d \tau} }$ \\ \cmidrule[0.1pt](rl){1-5}
\citep{qin2022non} & Sequential & $m, \tau$ & Yes & \begin{tabular}[c]{@{}l@{}}$\tilde{O}\rbr{Nm\sqrt{\tau} + dm \sqrt{\tau N} }$, \\ $\Omega\rbr{Nm\sqrt{\tau} + d \sqrt{m\tau N} }$\end{tabular} \\ \cmidrule[0.1pt](rl){1-5}
\citep{bilaj2024meta} & Sequential & None & Yes & \begin{tabular}[c]{@{}l@{}} at least $O\Big(N\sqrt{\tau} (d-m)$ \\ $\qquad \ \cdot \log \rbr{1 + \frac{m }{\lambda_{min}^{\Acal}(d-m)}}\Big)$\end{tabular} \\ \cmidrule[0.1pt](rl){1-5}
\textbf{BOSS (ours)} & Sequential & $N, m, \tau$ & No & \begin{tabular}[c]{@{}l@{}} $\tilde{O} \Big(Nm \sqrt{\tau} + N^{\frac{2}{3}} \tau^{\frac{2}{3}} d m^{\frac13}$ \\ $\hspace{45pt} + Nd^2 + \tau m d \Big)$ \end{tabular}
\\ \bottomrule[1.5pt]
\end{tabular}
\caption{A comparison of the settings, assumptions, and regret guarantees between our results and prior works. 
}
\label{tab:rel_works_full}
\end{table}

\section{Further discussion on related work}
\label{appendix:related_works}

\paragraph{Meta learning bandit problems.}
\cite{balcan2022meta} analyzed a harder problem in the Sequential setting by dealing with tasks generated by an adversary, and each task is an adversarial linear bandit problem. By using an online mirror decent base algorithm, they provided a guarantee of $\tilde{O}\rbr{ \min_{\frac{1}{\tau} \leq \epsilon \leq \frac{1}{\sqrt{\tau}}} N \rbr{d\hat{V}_{\epsilon} \sqrt{\tau} + \epsilon \tau} + N^{\frac{3}{4}}\tau^2 d }$, where $\hat{V}_{\epsilon}$
measures the proximity of the optimal parameters of all $N$ tasks. 
\cite{balcan2022meta}'s results also applies to the sequential meta-learning with adversarial $K$-arms bandit, which is further investigated by \cite{azizi2024stationary}. Their bandit meta-learning with a small set of optimal arms setting is analogous to the sequential multi-task representation transfer in linear bandit. The task diversity assumption is equivalent to the \cite{azizi2024stationary}'s Efficient Identification assumption; both require a gap to separate the noise from the problem's parameters. Our \boss algorithm also has some high-level similarities with their E-BASS algorithm.

\paragraph{Bilinear bandits.} The bilinear bandit problem described by \cite{pmlr-v139-jang21a} studies a setting where the learner 
has (potentially time-varying) sets of left actions and right actions available. It can take a left action $x_L$ and a right action $x_R$ and receive reward $r = x_L^{\top} \Theta x_R + \eta$. This reduces to our setting when the action set of the left action is the task descriptor $\cbr{e_i}_{i=1, \cdots, N}$ and the task's parameters have a low-rank structure. When applying their approach to our setting, their guarantee is $\tilde{O} \rbr{Nd\sqrt{ \tau} + N^{\frac{3}{2}}\sqrt{d \tau} }$, 
which is worse than independently using a classical algorithm, such as PEGE, for each task. This is due to the fact that \cite{pmlr-v139-jang21a}'s solution does not exploit the low-rank and the left action structures.

In Table \ref{tab:rel_works_full}, we comprehensively compare the settings and assumptions of the previous works for the multi-task bandit representation transfer problem.

\section{Additional information on Algorithm \ref{alg:PMA}}

\begin{algorithm}[H]
    \caption{\boss: Bandit Online Subspace Selection for Sequential Multitask Linear Bandits
    }
    \label{alg:PMA_full}
    
    \begin{algorithmic}[1]
    \State {\bfseries Input:} Task length $\tau$, number of task $N$, task dimension $d$, subspace dimension $m$, learning rate $\eta$, and exploration rate $p$.
    \State {\bfseries Initialize:} An uniform distribution $D_1$ over the set of experts $\Ecal^{\varepsilon}$ in Definition \ref{def:cover3}
    
    \For{$n \in [N]$:}
    \State Randomly draw $\hat{B}_n$ from $D_n$
    \State With probability $p$: $Z_n =1$, otherwise $Z_n =0$ 
    \If{$Z_n=1$
    }
    \State Exploration procedure in Algorithm \ref{alg:exr_procedure} 
    \State Update the distribution $D_{n+1}$ with the EWA algorithm
    
    \State For all $B \in \Ecal^{\varepsilon}$, observe the cost 
    $\tilde{C}_n(B) = \II(\| B_\perp^\top \hat{\theta}_n \|_2 \leq \alpha)\Cgood +\II(\| B_\perp^\top \hat{\theta}_n \|_2 > \alpha)\Cbad$
    \State The shifted and scaled loss:
    \[
    \ell_n(B) = 
    \frac{p}{\Cbad} \sbr{\tilde{C}_n(B)\frac{Z_n}{p} - \Cgood \frac{Z_n}{p}} \Bigg|_{Z_n=1} 
    = 
    \frac{1}{\Cbad} \sbr{\tilde{C}_n(B) - \Cgood}
    \]
    
    \State Update: $D_{n+1}(B) = \frac{D_n(B) \exp(-\eta \ell_n(B) )}{\sum_{B'\in \Ecal^{\varepsilon}} D_n(B') \exp(-\eta \ell_n(B') )}$
    
    \Else 
    \State Exploitation procedure in Algorithm \ref{alg:ext_procedure} with $\hat{B}_n$

    \State Update: $D_{n+1} = D_n$

    \Comment{In this case, $\ell_n(B) = 
    \frac{p}{\Cbad} \sbr{\tilde{C}_n(B)\frac{Z_n}{p} - \Cgood \frac{Z_n}{p}} \Bigg|_{Z_n=0} 
    = 
    0$}
    
    \EndIf
    \EndFor
    \end{algorithmic}
\end{algorithm}

\section{Additional details related to 
Section~\ref{sec:alg}}
\label{appendix:details_algorithm}

We first provide more details on the construction of the expert set used in Section~\ref{sec:alg}.

To construct $\Ecal^{\varepsilon}$, an $\varepsilon$-cover of $\Bcal$
in the principal angle sense (Definition~\ref{def:cover3})
, we do the following:
\begin{enumerate}
    \item Construct $\Ecal^{\frac{\varepsilon}{2}}_F$, a proper $\frac{\varepsilon}{2}$-cover over $(\Bcal_S, \|\cdot\|_F)$, where $\Bcal_S$ is defined in Definition \ref{def:cover_vec_sphere} (see below).
    Since $\Bcal \subset \Bcal_S$, $\Ecal^{\frac{\varepsilon}{2}}_F$ is an also improper $\frac{\varepsilon}{2}$-cover over $(\Bcal, \|\cdot\|_F)$. 
    \footnote{We follow the convention in~\cite{mjt_dlt} that a proper cover has to be a subset of the ground set, while an improper cover may not satisfy such property.}

    \item Construct $\Ecal^{\varepsilon}$ from $\Ecal^{\frac{\varepsilon}{2}}_F$ and show that it is a proper $\varepsilon$-cover over $(\Bcal, \|\cdot\|_F)$
    \item Show that $\Ecal^{\varepsilon}$ is a proper $\varepsilon$-cover over $(\Bcal, \|\cdot_{\perp}^\top \cdot\|_F)$.
\end{enumerate}

We provide the details in Subsection~\ref{sec:expert-set}.

\begin{definition}
$\Ecal^{\frac{\varepsilon}{2}}_F$ is said to be an $\frac{\varepsilon}{2}$-cover over the set of $\Bcal_S = \cbr{B: \|B\|_{\mathrm{F}} \leq \sqrt{m}}$ in the Frobenius norm sense, if:
\[
\forall \; B \in \Bcal_S, \; \exists A \in \Ecal^{\frac{\varepsilon}{2}}_F \; \text{such that } \| \vect(A)-\vect(B)\|_2 \leq \frac{\varepsilon}{2}
\]
\label{def:cover_vec_sphere}
\end{definition}
After defining the expert set, to use the EWA algorithm, we need to use a surrogate expert's cost $\tilde{C}_n$ defined in Equation 
\eqref{eqn:surrogate_cost}, which is a high probability upper-bound of the true cost $C_n(B)$ following Lemma \ref{lem:C_n_upper_bound}. To construct $\tilde{C}_n$, we need to introduce the definition of $\alpha$-covering in Definition \ref{def:alpha_cover_B}.

\begin{definition}
A subspace (represented by its orthonormal basis) $B \in \RR^{d \times m}$ is said to $\alpha$-approximately cover vector $\phi \in \RR^d$, if $\| B_\perp^\top \phi \|_2 \leq \alpha$.
\label{def:alpha_cover_B}
\end{definition}

\textbf{Note:} the specific choice of $B_\perp$ does not affect the validity of this definition -- e.g., $ \| B_\perp \phi \|_2$ are always the same regardless of the specific choice of $B_\perp$, as long as its columns form a orthonormal basis of $\spn(B_\perp)$: for any two choices of $B_\perp$ (denoted by $B_{\perp, 1}$, $B_{\perp, 2}$, respectively) there exists orthonormal $V \in \RR^{{(d-m) \times (d-m)}}$ such that $B_{\perp, 1} = B_{\perp, 2} V$. Therefore, 
        \[
        \|B_{\perp, 1}^{\top} \theta_n\|_2 = \|V^{\top}B_{\perp, 2}^{\top} \theta_n\|_2 = \|B_{\perp, 2}^{\top} \theta_n\|_2
        \]

The following lemma justifies that all valid choices of $B_\perp$  are equivalent up to a $(d-m) \times (d-m)$ orthogonal transformation:
        
\begin{lemma}
Let $W$ be a $k$-dimensional subspace of $\RR^d$. Let $B, \hat{B} \in \RR^{d \times k}$ be matrices whose columns form an orthonormal basis of $W$. Then, there exists an orthogonal matrix $V \in \RR^{k \times k}$ such that $\hat{B} = B V$.
\end{lemma}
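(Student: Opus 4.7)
The plan is to use the fact that the columns of $\hat{B}$ lie in $W = \spn(B)$, express them in the orthonormal basis provided by $B$, and then verify that the resulting change-of-basis matrix is orthogonal.

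First, I would observe that, since $W = \spn(B)$, every column $\hat{B}(j)$ of $\hat{B}$ belongs to $\spn(B)$. Hence there exist coefficients $V_{ij} \in \RR$ such that
\[
\hat{B}(j) = \sum_{i=1}^{k} V_{ij} \, B(i), \qquad j = 1, \ldots, k.
\]
Collecting these coordinate expansions into a matrix $V \in \RR^{k \times k}$ with entries $V_{ij}$, I get the factorization $\hat{B} = BV$. Since the columns of $B$ are orthonormal and hence linearly independent, this $V$ is uniquely determined (concretely, $V = B^\top \hat{B}$, because $B^\top B = I_k$).

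Next, I would verify orthogonality of $V$ by computing
\[
I_k = \hat{B}^\top \hat{B} = (BV)^\top (BV) = V^\top (B^\top B) V = V^\top V,
\]
using the orthonormality of the columns of $\hat{B}$ and of $B$. This shows $V^\top V = I_k$, i.e., $V$ is orthogonal. (Since $V$ is square, $V^\top V = I_k$ automatically implies $VV^\top = I_k$ as well.) This completes the argument: the required orthogonal matrix is precisely $V = B^\top \hat{B}$, and it satisfies $\hat{B} = BV$.

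There is no serious obstacle here; the statement is a standard consequence of the fact that two orthonormal bases of the same finite-dimensional inner-product space are related by an orthogonal transformation. The only thing to be careful about is to use both orthonormality conditions ($B^\top B = I_k$ \emph{and} $\hat{B}^\top \hat{B} = I_k$), since the first is what lets the change-of-basis matrix be recovered cleanly as $V = B^\top \hat{B}$, while the second is what forces that matrix to be orthogonal.
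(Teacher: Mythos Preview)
Your proof is correct and follows essentially the same approach as the paper: write $\hat{B} = BV$ using that the columns of $\hat{B}$ lie in $\spn(B)$, then compute $V^\top V = V^\top(B^\top B)V = \hat{B}^\top\hat{B} = I_k$. Your additional remarks (the explicit formula $V = B^\top \hat{B}$ and the uniqueness of $V$) are not in the paper's version but are welcome clarifications.
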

\begin{proof}
Since $B$ is a basis of $W$, there exists some $V$ such that $\hat{B} = BV$. Since $V^\top V = V^\top (B^\top B) V = (BV)^\top (BV) = \hat{B}^\top \hat{B} = I$, $V$ is an orthogonal matrix.
\end{proof}

Next, we justify the use of $\tilde{C}_n$ as a surrogate cost for the true cost $C_n$ in Lemma \ref{lem:C_n_upper_bound}, which requires Remark \ref{remark}.

\begin{remark}
\label{remark}
By the observation that 
\[
\| B_\perp^\top \phi \|_2 = \| B_\perp B_\perp^\top \phi \|_2 =  \| \phi - B B^\top \phi \|_2 = \min_{\theta \in \spn(B)} \| \phi - \theta \|,
\]
$B$ $\alpha$-approximately covers $\phi$ if and only if there exists some $\theta$ in $\spn(B)$ that is $\alpha$-close to $\phi$. As a result:

\begin{itemize}
\item If $\theta_n \in \spn(B)$ and $\| \theta_n - \hat{\theta}_n \| \leq \alpha$, $B$ also $\alpha$-covers $\hat{\theta}_n$;

\item If $B$ $\alpha$-covers $\hat{\theta}_n$ and $\| \theta_n - \hat{\theta}_n \| \leq \alpha$, $B$ also $2\alpha$-covers $\theta_n$. 
\end{itemize}
\end{remark}

\begin{lemma}
    \label{lem:C_n_upper_bound}
    If $\| \hat{\theta}_n - \theta_n \| \leq \alpha$, $\tilde{C}_n$ is an upper bound of $C_n$. 
    Equivalently, 
    \[
    \II(\| \hat{\theta}_n - \theta_n \| \leq \alpha) C_n(B) \leq  \II(\| \hat{\theta}_n - \theta_n \| \leq \alpha) \tilde{C}_n(B)
    \]
\end{lemma}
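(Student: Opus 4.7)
The plan is to proceed by case analysis on the value of $\tilde{C}_n(B)$, using the triangle inequality and the fact that $B_\perp^\top$ is a contraction.

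The case $\tilde{C}_n(B) = \Cbad$ is trivial, since $C_n(B) \in \{\Cgood, \Cbad\}$ and $\Cgood \leq \Cbad$, so the inequality $C_n(B) \leq \tilde{C}_n(B)$ holds regardless of the value of $\|B_\perp^\top \theta_n\|_2$.

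The interesting case is $\tilde{C}_n(B) = \Cgood$, which by Equation~\eqref{eqn:surrogate_cost} means $\|B_\perp^\top \hat{\theta}_n\|_2 \leq \alpha$. Here I would aim to show that $\|B_\perp^\top \theta_n\|_2 \leq 2\alpha$, which by Equation~\eqref{eqn:C_n_ideal} yields $C_n(B) = \Cgood = \tilde{C}_n(B)$. The argument is a direct triangle inequality:
\[
\|B_\perp^\top \theta_n\|_2 \leq \|B_\perp^\top \hat{\theta}_n\|_2 + \|B_\perp^\top (\theta_n - \hat{\theta}_n)\|_2 \leq \alpha + \|\theta_n - \hat{\theta}_n\|_2 \leq 2\alpha,
\]
where the second inequality uses the hypothesis $\|B_\perp^\top \hat{\theta}_n\|_2 \leq \alpha$ together with the fact that $B_\perp$ has orthonormal columns (so $B_\perp B_\perp^\top$ is an orthogonal projection and $\|B_\perp^\top v\|_2 \leq \|v\|_2$ for all $v$), and the last inequality uses the assumption $\|\hat{\theta}_n - \theta_n\|_2 \leq \alpha$. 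This is exactly the content of the second bullet in Remark~\ref{remark}.

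There is no real obstacle here; the lemma is essentially a restatement of the second bullet of Remark~\ref{remark} phrased through the cost functions $C_n$ and $\tilde{C}_n$. The only subtlety worth mentioning is that the quantities $\|B_\perp^\top \hat{\theta}_n\|_2$ and $\|B_\perp^\top \theta_n\|_2$ are well-defined independently of the particular choice of $B_\perp$ (as justified by the remark preceding the lemma), so the case split in the definitions of $C_n$ and $\tilde{C}_n$ is unambiguous. After handling both cases, the two displayed inequalities in the lemma statement follow.
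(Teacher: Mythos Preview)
Your proof is correct and uses essentially the same triangle-inequality argument as the paper. The only cosmetic difference is that the paper splits cases on the value of $\|B_\perp^\top \theta_n\|_2$ (i.e., on $C_n(B)$) while you split on $\|B_\perp^\top \hat{\theta}_n\|_2$ (i.e., on $\tilde{C}_n(B)$); your nontrivial case is precisely the contrapositive of the paper's Case~2.
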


\begin{proof}
Recall that 
\[
C_n(B) 
= 
\begin{cases}
\Cgood, & \| B_\perp^\top \theta_n \| \leq 2\alpha \\
\Cbad, & \| B_\perp^\top \theta_n \| > 2\alpha
\end{cases},
\;\;
\tilde{C}_n(B) 
= 
\begin{cases}
\Cgood, & \| B_\perp^\top \theta_n \| \leq 2\alpha \\
\Cbad, & \| B_\perp^\top \theta_n \| > 2\alpha
\end{cases},
\]
First, observe that $\tilde{C}_n(B) \geq \Cgood$ for any $B$. Then, we conduct a case analysis:

\begin{itemize}
    \item Case 1: $\| B_\perp^\top \theta_n \|_2 \leq 2\alpha$. In this case, $C_n(B) = \Cgood \leq \tilde{C}_n(B)$.
    
    \item Case 2: $\| B_\perp^\top \theta_n \|_2 > 2\alpha$. In this case, $C_n(B) = \Cbad$.
    We now show that $\tilde{C}_n(B) = \Cbad$. Indeed, 
    \begin{align*}
        \| B_\perp^\top \theta_n \|_2 &\leq \| B_\perp^\top \hat{\theta}_n \|_2 + \| B_\perp^\top (\hat{\theta}_n - \theta_n) \|_2\\
        &\leq \| B_\perp^\top \hat{\theta}_n \|_2 + \alpha\\
        \implies 2\alpha &< \| B_\perp^\top \hat{\theta}_n \|_2 + \alpha\tag{$\| B_\perp^\top \theta_n \|_2 > 2\alpha$}\\
        \implies \alpha &< \| B_\perp^\top \hat{\theta}_n \|_2
    \end{align*}
    where the first inequality is by triangle inequality; the second inequality is by the definition of this case and that $\| B_\perp^\top (\hat{\theta}_n - \theta_n) \|_2 \leq \| \hat{\theta}_n - \theta_n \| \leq \alpha$.
\end{itemize}

In summary, in both cases, $C_n(B) \leq \tilde{C}_n(B)$. \qedhere
\qedhere

\end{proof}

To ensure that 
there exists some $B \in \Ecal^\varepsilon$ that can $\alpha$-cover all $\hat{\theta}_n$
, we need to choose $\varepsilon$ by following Lemma \ref{lem:alpha-epsilon}.

\begin{lemma}
By choosing $\varepsilon \leq \alpha$, there exists some $B_\varepsilon \in \Ecal^\varepsilon$ such that, for all $n$, 
$B_\varepsilon$ $\alpha$-approximately covers $\theta_n$. 
\label{lem:alpha-epsilon}
\end{lemma}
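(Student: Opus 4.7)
The plan is to exploit the ground-truth basis $B$ from Assumption~\ref{assum:rep} as a witness: since every $\theta_n$ lies exactly in $\spn(B)$, any element of $\Ecal^\varepsilon$ that is $\varepsilon$-close to $B$ in principal-angle distance will $\varepsilon$-approximately contain every $\theta_n$, and Definition~\ref{def:cover3} guarantees such an element exists.

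First, I would invoke Assumption~\ref{assum:rep} to write $\theta_n = B w_n$ for all $n$, where $B \in \RR^{d \times m}$ is semi-orthonormal and therefore lies in $\Bcal$. Since $B^\top B = I_m$, we have $\|w_n\|_2 = \|\theta_n\|_2 \leq \theta_{\max} \leq 1$ by Assumption~\ref{assum:linear_bandit}. Applying Definition~\ref{def:cover3} to this particular $B$, I pick $B_\varepsilon \in \Ecal^\varepsilon$ with $\|B_\perp^\top B_\varepsilon\|_{\mathrm{F}} \leq \varepsilon$; this single $B_\varepsilon$ will serve as the desired witness uniformly over all tasks $n$.

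The key intermediate step is to convert the bound on $\|B_\perp^\top B_\varepsilon\|_{\mathrm{F}}$ into a bound on $\|(B_\varepsilon)_\perp^\top B\|_{\mathrm{F}}$, since it is the latter that controls how well $B_\varepsilon$ approximates vectors in $\spn(B)$. This is the standard symmetry-of-principal-angles identity: using $B_\perp B_\perp^\top = I_d - B B^\top$ and the corresponding identity for $B_\varepsilon$, a one-line trace computation gives
\[
\|B_\perp^\top B_\varepsilon\|_{\mathrm{F}}^2 = m - \|B^\top B_\varepsilon\|_{\mathrm{F}}^2 = \|(B_\varepsilon)_\perp^\top B\|_{\mathrm{F}}^2,
\]
so the two Frobenius norms coincide. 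This is the only mildly non-obvious ingredient, and it is where I expect the main (minor) technical work to lie.

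Combining these bounds, for every task $n$,
\[
\|(B_\varepsilon)_\perp^\top \theta_n\|_2 = \|(B_\varepsilon)_\perp^\top B w_n\|_2 \leq \|(B_\varepsilon)_\perp^\top B\|_{\mathrm{F}} \cdot \|w_n\|_2 \leq \varepsilon \cdot \theta_{\max} \leq \varepsilon \leq \alpha,
\]
where the first inequality uses that the operator norm is bounded by the Frobenius norm, and the final inequalities use $\theta_{\max} \leq 1$ together with the hypothesis $\varepsilon \leq \alpha$. By Definition~\ref{def:alpha_cover_B}, this says precisely that $B_\varepsilon$ $\alpha$-approximately covers $\theta_n$, completing the argument. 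Every step beyond the principal-angle identity is just the triangle inequality and submultiplicativity of standard matrix norms.
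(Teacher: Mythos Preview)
Your proposal is correct and follows essentially the same approach as the paper's proof: both exploit $\theta_n \in \spn(B)$, pick $B_\varepsilon \in \Ecal^\varepsilon$ close to $B$, and bound $\|(B_\varepsilon)_\perp^\top \theta_n\|_2$ via $\|(B_\varepsilon)_\perp^\top B\|_{\mathrm{F}}$ and $\|\theta_n\|_2 \leq \theta_{\max} \leq 1$. If anything, your version is a bit tighter in exposition: you explicitly verify the symmetry $\|B_\perp^\top B_\varepsilon\|_{\mathrm{F}} = \|(B_\varepsilon)_\perp^\top B\|_{\mathrm{F}}$ (the paper uses it implicitly in its last inequality), and you fix a single $B_\varepsilon$ up front rather than carrying a $\min_{U}$ through the chain.
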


\begin{proof}
For any $n \in [N]$, $\theta_n \in \spn(B)$. Hence, $\theta_n = P_B \theta_n$.

Thus,
\begin{align*}
    \min_{U \in \Ecal^{\varepsilon}} \|\theta_n - UU^{\top} \theta_n\|_2 &= \min_{U \in \Ecal^{\varepsilon}} \|U_{\perp}U_{\perp}^{\top} \theta_n\|_2\\
    &=\min_{U \in \Ecal^{\varepsilon}} \|U_{\perp}U_{\perp}^{\top} BB^{\top}\theta_n\|_2\\
    &\leq \min_{U \in \Ecal^{\varepsilon}} \|U_{\perp}\|_{op} \|U_{\perp}^{\top} B\|_{\mathrm{F}}  \|B^{\top}\|_{op}^2 \|\theta_n\|_2\\
    &\leq \min_{U \in \Ecal^{\varepsilon}} \|U_{\perp}^{\top} B\|_{\mathrm{F}} \theta_{\max}\\
    &\leq \varepsilon \theta_{\max}.
\end{align*}

Hence, assuming that $\theta_{\max} \leq 1$, by setting $\varepsilon \leq \alpha$, we have that $\min_{U \in \Ecal^{\varepsilon}} \|\phi - UU^{\top} \phi\|_2 \leq \alpha$.
\end{proof}

\subsection{The expert set $\Ecal^\varepsilon$: construction and size}
\label{sec:expert-set}

First, we construct the surrogate set $\Ecal^{\frac{\varepsilon}{2}}_F$, a proper $\frac{\varepsilon}{2}$-cover over $(\Bcal_S, \|\cdot\|_F)$.
Following 
\cite{epscover}, 
we construct it by discretizing the hyper-cube that contains the ball $\Ecal^{\frac{\varepsilon}{2}}_F$. Then $|\Ecal^{\frac{\varepsilon}{2}}_F| \leq \rbr{\frac{4m\sqrt{d}}{\varepsilon}}^{dm} = (\frac{dm}{\varepsilon})^{O(dm)}$.
Note that  
for any $B \in \Bcal$, 
$\|B\|_{\mathrm{F}} = \|\vect(B)\|_2 = \sqrt{m}$; therefore, $\Bcal \subset \Bcal_S$, and thus $\Ecal^{\frac{\varepsilon}{2}}_F$ is an improper $\frac{\varepsilon}{2}$-cover
of $\Bcal$.

Then, we follow a procedure similar to  \cite[][Remark 15.3]{mjt_dlt} to construct $\Ecal^\epsilon$ as follows:
\[
\Ecal^\epsilon = \cbr{ U(b): b \in \Ecal_F^{\frac \varepsilon 2} },
\]
where $U(b) := \argmin_{B \in \Bcal} \| B - b \|_F$. 
In words, $\Ecal^\epsilon$ is the collections of ``nearest neighbors'' of $\Ecal^\epsilon$ in $\Bcal$. 
By the definition of $\Ecal^\epsilon$, 
$\Ecal^\epsilon \subset \Bcal$
and 
$|\Ecal^\epsilon| 
\leq 
|\Ecal_F^{\frac \varepsilon 2}|
\leq 
(\frac{dm}{\epsilon})^{O(dm)}$.

For the rest of the subsection, we will show that our construction of $\Ecal^\epsilon$ satisfies Definition~\ref{def:cover3}.

\begin{lemma}
\hfill
\begin{enumerate}[leftmargin=*]
    \item $\Ecal^\varepsilon$ is a proper $\varepsilon$-cover of $(\Bcal, (A, B) \mapsto \| A - B \|_F)$.

    \item $\Ecal^\varepsilon$ is a proper $\varepsilon$-cover of $(\Bcal, (A, B) \mapsto \| A ^{\top}_{\perp} B \|_F)$. 
\end{enumerate}

    \label{lem:PMA-expert-set-size}
\end{lemma}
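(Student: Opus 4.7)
The plan is to prove the two claims in order: part 1 first by a direct projection-then-triangle-inequality argument, and part 2 by reducing to part 1 via the pointwise bound $\|A^\top_\perp B\|_{\mathrm{F}} \leq \|A-B\|_{\mathrm{F}}$ for any $A,B \in \Bcal$. Properness (i.e.\ $\Ecal^\varepsilon \subseteq \Bcal$) is immediate in both parts from the definition $U(b) := \argmin_{B \in \Bcal}\|B-b\|_{\mathrm{F}} \in \Bcal$.

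For part 1, I would start from the observation that every $B \in \Bcal$ satisfies $\|B\|_{\mathrm{F}} = \sqrt{m}$, so $\Bcal \subseteq \Bcal_S$. By the covering property of $\Ecal^{\varepsilon/2}_F$, pick $b \in \Ecal^{\varepsilon/2}_F$ with $\|B-b\|_{\mathrm{F}} \leq \varepsilon/2$. Since $U(b)$ minimizes $\|\cdot - b\|_{\mathrm{F}}$ over $\Bcal$ and $B \in \Bcal$ is a competitor, $\|U(b) - b\|_{\mathrm{F}} \leq \|B - b\|_{\mathrm{F}} \leq \varepsilon/2$. The triangle inequality then gives $\|B - U(b)\|_{\mathrm{F}} \leq \varepsilon$, proving the first claim.

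For part 2, the main work is the inequality $\|A^\top_\perp B\|_{\mathrm{F}} \leq \|A - B\|_{\mathrm{F}}$ for $A, B \in \Bcal$. I would take the SVD $A^\top B = U \Sigma V^\top$ with $\Sigma = \mathrm{diag}(\sigma_1,\ldots,\sigma_m)$, where $\sigma_i \in [0,1]$ are the cosines of the principal angles. Using $I_d = A A^\top + A_\perp A_\perp^\top$ together with $B^\top B = I_m$,
\begin{equation*}
\|A^\top_\perp B\|_{\mathrm{F}}^2 = \mathrm{tr}(B^\top A_\perp A_\perp^\top B) = m - \|A^\top B\|_{\mathrm{F}}^2 = m - \sum_i \sigma_i^2,
\end{equation*}
while $\|A-B\|_{\mathrm{F}}^2 = 2m - 2\,\mathrm{tr}(A^\top B)$. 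By von Neumann's trace inequality, $\mathrm{tr}(A^\top B) \leq \sum_i \sigma_i$, and the elementary bound $2\sigma_i \leq 1 + \sigma_i^2$ (i.e.\ $(1-\sigma_i)^2 \geq 0$) gives $2\,\mathrm{tr}(A^\top B) \leq m + \sum_i \sigma_i^2$. Rearranging yields $m - \sum_i \sigma_i^2 \leq 2m - 2\,\mathrm{tr}(A^\top B)$, i.e.\ $\|A^\top_\perp B\|_{\mathrm{F}} \leq \|A-B\|_{\mathrm{F}}$. Combined with part 1 applied to the pair $(B, B') = (B, U(b))$, this gives $\|(B')^\top_\perp B\|_{\mathrm{F}} \leq \varepsilon$, establishing part 2.

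The main obstacle is the pointwise inequality $\|A^\top_\perp B\|_{\mathrm{F}} \leq \|A-B\|_{\mathrm{F}}$; it is essentially the statement that sines of principal angles are dominated by Frobenius chord distance, and the cleanest route is the two-step SVD computation plus the $2\sigma \leq 1 + \sigma^2$ trick above. Everything else is bookkeeping: triangle inequality, the defining property of the nearest-neighbor map $U(b)$, and the trivial inclusion $\Bcal \subseteq \Bcal_S$.
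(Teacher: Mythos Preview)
Your proposal is correct and essentially matches the paper's proof. Part~1 is identical; for part~2 both arguments reduce to the pointwise bound $\|A_\perp^\top B\|_{\mathrm{F}}^2 \le \|A-B\|_{\mathrm{F}}^2$ and prove it via the same elementary inequality $2\sigma_i \le 1+\sigma_i^2$ (equivalently $\cos^2\phi_i \ge 2\cos\phi_i - 1$). The only cosmetic difference is in the intermediate step $\|A-B\|_{\mathrm{F}}^2 \ge 2m - 2\sum_i \sigma_i$: the paper routes this through the orthogonal Procrustes problem ($\|A-B\|_{\mathrm{F}}^2 \ge \min_R \|AR-B\|_{\mathrm{F}}^2 = 2m - 2\sum_i \cos\phi_i$), whereas you invoke von Neumann's trace inequality directly ($\mathrm{tr}(A^\top B) \le \sum_i \sigma_i$); these are the same bound under different names.
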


\begin{proof}
    
For the first item, we consider any $B \in \Bcal$. We would like to show that there exists some element $C$ in $\Ecal^\epsilon$ such that $\| C - B \|_F \leq \varepsilon$.

We find element $C$ as follows: first, by definition of $\Ecal_F^{\frac \varepsilon 2}$, there exists element $b \in \Ecal_F^{\frac \varepsilon 2}$ such that $\| B - b \|_F \leq \frac \varepsilon 2$. We define $C = U(b)$. 
Therefore, $\| C - b \| \leq \| B - b \|_F \leq \frac \varepsilon 2$. 
Thus: 
\[
\| B - C \|_F
\leq 
\| B - b \|_F
+ 
\| C - b \|_F 
\leq 
\frac \varepsilon 2 + \frac \varepsilon 2
\leq 
\varepsilon. 
\]

    Hence, $\Ecal^\varepsilon$ is a proper $\varepsilon$-cover over $(\Bcal, \| \cdot \|_{\mathrm{F}})$.

We now prove the second item. 
It suffices to show that, 
for $A, B \in \Bcal$, 
$\|A^{\top}_{\perp}B\|_{\mathrm{F}}^2 \leq \|A - B\|_{\mathrm{F}}^2$.

Following \citep{daviskahan},
for semi-orthogonal matrices $A, B \in \Bcal$, we relate $\|A^{\top}_{\perp}B\|_{\mathrm{F}}$ to the orthogonal Procrustes problem in \citep{gower2004procrustes},
\[
\min_{R \in \RR^{m \times m}: R^{\top}R=I_m} \|AR - B\|_{\mathrm{F}}^2.
\]
With the solution $R^* = A^{\top}B\rbr{B^{\top}AA^{\top}B}^{-\frac12}$ at the optimum,
\begin{align*}
    \|A^{\top}_{\perp}B\|_{\mathrm{F}}^2 &= \sum_{i=1}^m \sin^2(\phi_i)\tag{$\phi$ is the canonical angle}\\
    &= m - \sum_{i=1}^m \cos^2(\phi_i)\\
    &\leq m - \sum_{i=1}^m \rbr{2\cos(\phi_i)-1}\tag{$\cos(\phi_i)^2\geq 2\cos(\phi_i) - 1$}\\
    &= 2m - 2\sum_{i=1}^m \cos(\phi_i)\\
    &= \|AR^* - B\|_{\mathrm{F}}^2. \tag{See \citep{daviskahan}}
\end{align*}

We have $\|A^{\top}_{\perp}B\|_{\mathrm{F}}^2 \leq \|AR^* - B\|_{\mathrm{F}}^2 \leq \|A - B\|_{\mathrm{F}}^2$. 

\end{proof}

\section{Proof of Lemma \ref{lem:exr_procedure}: Regret of Meta-exploration Tasks}

\label{appendix:reg_exploration}

We first restate Lemma~\ref{lem:exr_procedure}.
\cinfo*

\begin{proof}
For the first item, following \citep[Lemma 3.4]{rusmevichientong2010linearly}, we have
\[
\mathbb{E}\left[\|\hat{\theta}_n-\theta_n\|^2\right] \leq c_0\frac{d^2}{\tau_1},
\]
where $c_0$ is a constant that depends on $\lambda_0$ and $M$.
By \citep[Lemma 17]{yang2020impact}, we have that $\max _{a \in \mathcal{A}} \inner{a - A_{n,t}}{ \theta_n}\leq J\|\theta_n-\hat{\theta}_n\|^2 /\|\theta_n\|$, where $A_{n,t} = \argmax_{a \in \Acal} \inner{a}{\hat{\theta}_n}$
and $J = \frac{\lambda_{max}(M)}{\sqrt{\lambda_{min}(M)}} = \frac{\lambda_{max}(M)}{\lambda_0}$. 
Thus,
\begin{align*}
    R_\tau^n &= \EE \sbr{\tau \max_{a \in \Acal} \inner{\theta_n}{a} - \sum_{t=1}^\tau \inner{\theta_n}{A_{n,t}}}\\
    &= \EE \sbr{\tau_1 \max_{a \in \Acal} \inner{\theta_n}{a} - \sum_{t=1}^{\tau_1} \inner{\theta_n}{A_{n,t}} + (\tau-\tau_1) \max_{a \in \Acal} \inner{\theta_n}{a} - \sum_{t=\tau_1+1}^\tau \inner{\theta_n}{A_{n,t}}}\\
    &\leq \lambda_0 \theta_{\max}  \tau_1 + (\tau-\tau_1)J\frac{\EE\| \hat{\theta}_n - \theta_n \|^2}{\theta_{\min}} \\
    & \leq \lambda_0 \theta_{\max}  \tau_1 +  \frac{J}{\theta_{\min}} \cdot \tau c_0 \frac{d^2}{\tau_1}.
\end{align*}
The proof of the first item is concluded by taking $c_1 = \max( \lambda_0 \theta_{\max}, \frac{J c_0}{\theta_{\min}} )$.

For the second item, recall that $u = \frac{\tau_1}{d}$, and $A_{n,1}, \ldots, A_{n, \tau_1}$ are constructed as follows:
for each $i \in [d]$ and $t \in \cbr{u(i-1) + 1, \cdots, ui}$, $A_{n, t} = \lambda_0 e_i$.

Since
$\hat{\theta}_n := \argmin_{\theta} \frac{1}{\tau_1} \sum _{t=1}^{\tau_1} \rbr{\inner{A_{n, t}}{\theta} - r_{n, t}}^2$,
by the closed-form solution of ordinary least squares, we have
\[
\hat{\theta}_n = (A_n^{\top} A_n)^{-1} A_n^{\top} r_n,
\]

where
\[
A_n := \begin{pmatrix} A_{n,1}^\top \\ \cdots \\A_{n,\tau_1}^\top \end{pmatrix} \in \RR^{\tau_1 \times d},
\]
and $r_n := (r_{n,1}, \cdots, r_{n,\tau_1})$.
Observe that 
\begin{align}
\label{eq:A_n_identity}
A_n^\top A_n = u \lambda_0^2 \sum_{i=1}^{d} e_i e_i^{\top} = u \lambda_0^2 I_d.
\end{align}
Let $\eta_n := (\eta_{n,1}, \ldots, \eta_{n, \tau_1})$.
We have
\begin{align*}
    \hat{\theta}_n &= (A_n^{\top} A_n)^{-1} A_n^{\top} r_n\\
    & \stackrel{\mathrm{(a)}}{=}  \frac{1}{u \lambda_0^2} A_n^{\top} r_n\\
    & =  \frac{1}{u \lambda_0^2} A_n^{\top} (A_n \theta_n + \eta_n)\\
    & \stackrel{\mathrm{(b)}}{=}   \theta_n + \frac{1}{u \lambda_0^2} A_n^{\top} \eta_n,
\end{align*}
where both (a) and (b) follow from Eq.~\eqref{eq:A_n_identity}. 

It now suffices to show that
$\nbr{\hat{\theta}_n - \theta_n}_2 = \nbr{\frac{1}{u \lambda_0^2} A_n^\top \eta_n}_2 \leq  O \rbr{d \sqrt{\frac{\log (d/\delta)}{\tau_1 }} }$ with probability at least $1 - \delta$.
To this end, observe that by the construction of $A_n$,
\[
A_n^{\top} \eta_n = \begin{pmatrix} \lambda_0 \sum_{t=1}^{u} \eta_{n,t} \\ \cdots \\\lambda_0 \sum_{t=\tau_1 - u+1}^{\tau_1} \eta_{n,t}\end{pmatrix}.
\]

Since for each $n$ and $t$, $\eta_{n,t}$ is zero-mean and 1-sub-Gaussian, by \citep[Corollary 5.5]{lattimore2020bandit}, for any $i \in [d]$, we have
\[
\Pr \rbr{ \abr{\sum_{t=u(i-1) + 1}^{ui} \eta_{n,t}} \geq \sqrt{ 2u \log (2/\delta')}} \leq \delta'.
\]
Let $\delta = d \delta'$ and 
\[
F_n := \cbr{\forall i \in [d], \quad 
\abr{(A_n^\top \eta_n)_i }
\leq 
\lambda_0 \sqrt{ 2u \log (2d/\delta)}}.
\]
Then, by the union bound, $F_n$ happens with probability at least $1 - \delta$.
Under the event $F_n$,
\begin{align*}
    \|\hat{\theta}_n -  \theta_n\|_2 &= \nbr{\frac{1}{u \lambda_0^2} A_n^{\top} \eta_n}_2\\
    & = \frac{1}{u \lambda_0^2} \sqrt{\sum_{i=1}^d (A_n^\top \eta_n)_i^2} \\
    &\leq  \frac{1}{u \lambda_0} \sqrt{d \cdot 2u \log (2d/\delta)} \\
    & \le  O \rbr{d \sqrt{\frac{\log (d/\delta)}{\tau_1 } }},
\end{align*}
and the proof of the second item is complete. \qedhere

\end{proof}

\section{Proof of Lemma 
\ref{lem:ext_procedure}: Regret of Meta-exploitation Tasks
}
\label{app:meta-exploitation}

We first restate Lemma~\ref{lem:ext_procedure}.
\chit*

\begin{proof}

Similar to the approach in \citep{rusmevichientong2010linearly}, define $J = \frac{\lambda_{max}(M)}{\sqrt{\lambda_{min}(M)}} = \frac{\lambda_{max}(M)}{\lambda_0}$. 
By \citep[Lemma 17]{yang2020impact}, we have $\max _{a \in \mathcal{A}} \inner{a - A_{n,t}}{ \theta_n}\leq J\|\theta_n-\hat{\theta}_n\|^2 /\|\theta_n\|$, where $A_{n,t} = \argmax_{a \in \Acal} \inner{a}{\hat{\theta}_n}$.
Thus,
\begin{align*}
    R_\tau^n &= \EE \sbr{\tau \max_{a \in \Acal} \inner{\theta_n}{a} - \sum_{t=1}^\tau \inner{\theta_n}{A_{n,t}}}\\
    &= \EE \sbr{\tau_2 \max_{a \in \Acal} \inner{\theta_n}{a} - \sum_{t=1}^{\tau_2} \inner{\theta_n}{A_{n,t}} + (\tau-\tau_2) \max_{a \in \Acal} \inner{\theta_n}{a} - \sum_{t=\tau_2+1}^\tau \inner{\theta_n}{A_{n,t}}}\\
    &\leq \lambda_0 \theta_{\max}  \tau_2 + (\tau-\tau_2)J\frac{\EE\|\theta_n - \hat{B}_n\hat{w}_n\|^2}{\theta_{\min}}.
\end{align*}

To bound the second term, we use Lemma \ref{thm:use-hatb-estimate-theta} (subspace-informed estimation). We have
\[
\EE \left \|\hat{B}_n\hat{w}_n-\theta_n\right \|^2 \leq \frac{m^2}{\lambda_0^2 \tau_2} +  \|\hat{B}^{\top}_{n, \perp}\theta_n\|^2.
\]
It follows that
\[
R_\tau^n \leq c \cdot \rbr{ \tau_2 + \tau \cdot  \rbr{ \frac{m^2}{\tau_2} + \| \hat{B}_{n,\perp}^\top \theta_n \|_2^2} }.
\]

In addition, when $\|\hat{B}^{\top}_{n, \perp}\theta_n\|_2 \leq 2 \alpha$,
\[
R_\tau^n \leq
\lambda_0 \theta_{\max}  \tau_2 + (\tau-\tau_2)\frac{J}{\theta_{\min}} \rbr{\frac{m^2}{\lambda_0^2 \tau_2} +  4 \alpha^2} \leq 
4c\rbr{ \tau_2 + \tau \cdot  \rbr{ \frac{m^2}{\tau_2} + \alpha^2 } },
\]
where $c = \max \cbr{\lambda_0 \theta_{\max}, \frac{J}{\lambda_0^2\theta_{\min}}, \frac{J}{\theta_{\min}}}$.
\qedhere
\end{proof}

We now present Lemma~\ref{thm:use-hatb-estimate-theta} used in the proof above for subspace-informed estimation; see also \citep[Lemma 2]{qin2022non} and \citep[Lemma 18]{yang2020impact}.

\begin{lemma}[\textbf{Subspace-informed estimation}]
\label{thm:use-hatb-estimate-theta}
Suppose Algorithm \ref{alg:ext_procedure} is run on task $n$ with the exploration length $\tau_2$, then $\EE \|\hat{\theta}_n - \theta_n\|^2 \leq \frac{m^2}{\lambda_0^2 \tau_2} + \|\hat{B}^{\top}_{n,\perp}\theta_n\|^2$.

\end{lemma}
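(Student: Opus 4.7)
The plan is to compute $\hat{w}_n$ in closed form, express $\hat\theta_n - \theta_n$ as the sum of an \emph{in-subspace variance term} and an \emph{out-of-subspace bias term} that are orthogonal, and then apply the Pythagorean identity before taking expectations.

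First I would set $w_n^\star := \hat B_n^\top \theta_n \in \RR^m$, the projection of $\theta_n$ onto the subspace $\spn(\hat B_n)$, so that $\theta_n = \hat B_n w_n^\star + \hat B_{n,\perp}\hat B_{n,\perp}^\top \theta_n$. Let $\tilde A \in \RR^{\tau_2 \times m}$ be the ``reduced'' design matrix whose $t$-th row is $\lambda_0 e_i^\top$ for $t \in \{u(i-1)+1,\dots,ui\}$, so that $\langle A_{n,t}, \hat B_n w\rangle = \tilde A_{t,:}\, w$ (this uses $\hat B_n^\top \hat B_n = I_m$). By construction, $\tilde A^\top \tilde A = u\lambda_0^2\sum_i e_i e_i^\top = \tfrac{\lambda_0^2 \tau_2}{m} I_m$, and by writing $r_{n,t} = \lambda_0 \hat B_n(i)^\top \theta_n + \eta_{n,t} = \tilde A_{t,:}\, w_n^\star + \eta_{n,t}$, the reward vector is $r_n = \tilde A w_n^\star + \eta_n$. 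The normal equations then give
\[
\hat w_n - w_n^\star = (\tilde A^\top \tilde A)^{-1}\tilde A^\top \eta_n = \tfrac{m}{\lambda_0^2 \tau_2}\tilde A^\top \eta_n.
\]

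Next I would bound $\EE\|\hat w_n - w_n^\star\|^2$. Observe that $\tilde A^\top \eta_n = \lambda_0 \sum_{i=1}^{m} e_i \sum_{t \in \text{block}_i}\eta_{n,t}$, so $\|\tilde A^\top \eta_n\|^2 = \lambda_0^2 \sum_{i=1}^m \bigl(\sum_{t \in \text{block}_i}\eta_{n,t}\bigr)^2$. Using independence and the fact that each $\eta_{n,t}$ is mean-zero and $1$-sub-Gaussian, $\EE\bigl(\sum_{t \in \text{block}_i}\eta_{n,t}\bigr)^2 \le u$, so $\EE\|\tilde A^\top\eta_n\|^2 \le \lambda_0^2 m u = \lambda_0^2 \tau_2$. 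Combining,
\[
\EE\|\hat w_n - w_n^\star\|^2 \le \tfrac{m^2}{\lambda_0^4 \tau_2^2}\cdot \lambda_0^2 \tau_2 = \tfrac{m^2}{\lambda_0^2 \tau_2}.
\]

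Finally I would assemble the full error. Since $\hat\theta_n = \hat B_n \hat w_n$ and $\theta_n = \hat B_n w_n^\star + \hat B_{n,\perp}\hat B_{n,\perp}^\top \theta_n$, we get $\hat\theta_n - \theta_n = \hat B_n(\hat w_n - w_n^\star) - \hat B_{n,\perp}\hat B_{n,\perp}^\top \theta_n$, and the two summands lie in orthogonal subspaces. Applying the Pythagorean identity together with $\|\hat B_n v\|_2 = \|v\|_2$ and $\|\hat B_{n,\perp}\hat B_{n,\perp}^\top \theta_n\|_2 = \|\hat B_{n,\perp}^\top\theta_n\|_2$ yields $\|\hat\theta_n-\theta_n\|^2 = \|\hat w_n - w_n^\star\|^2 + \|\hat B_{n,\perp}^\top\theta_n\|^2$; taking expectations gives the stated bound.

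The main step that requires a small amount of care is checking that the ``inside the subspace'' and ``outside the subspace'' errors really decouple, i.e.\ that the orthogonality $\hat B_n^\top \hat B_{n,\perp} = 0$ together with $\hat B_n^\top \hat B_n = I_m$ lets us pass from $\|\hat\theta_n-\theta_n\|^2$ to the sum of a simple OLS-style variance (fully determined by the noise and the $m$-dimensional design) and a deterministic bias $\|\hat B_{n,\perp}^\top\theta_n\|^2$; everything else is a direct calculation.
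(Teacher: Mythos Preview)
Your proof is correct and follows essentially the same route as the paper: both decompose $\hat\theta_n-\theta_n$ into an in-subspace noise term $\hat B_n(\hat w_n-w_n^\star)$ and an out-of-subspace bias term $-\hat B_{n,\perp}\hat B_{n,\perp}^\top\theta_n$, bound the expected squared norm of each piece identically, and combine. Your invocation of the Pythagorean identity is in fact the cleaner justification for this last step; the paper labels it ``triangle inequality,'' but it is really orthogonality that gives $\|s_1+s_2\|^2=\|s_1\|^2+\|s_2\|^2$, exactly as you argue.
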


\begin{proof}
    Without loss of generality, we assume that  $\tau_2$ is a multiple of $m$. Since $A_{n, t} = \lambda_0\hat{B}_n(i)$, $i \in [m]$, and each action repeats $\lfloor \tau_2/m \rfloor$ times for $t \le \tau_2$
    , we have $\sum_{t=1}^{\tau_2} A_{n, t} A_{n, t}^{\top} = \frac{\tau_2 \lambda_0^2}{m}\hat{B}_n\hat{B}_n^\top$. 
    Thus,
    \begin{align*}
        \sum_{t=1}^{\tau_2} \hat{B}_n^{\top} A_{n, t} A_{n, t}^{\top} \hat{B}_n &=\frac{\tau_2 \lambda_0^2}{m} \hat{B}_n^{\top} \hat{B}_n \hat{B}_n^{\top} \hat{B}_n\\
        &=\frac{\tau_2 \lambda_0^2}{m}  I_m.
    \end{align*}
    
    Then, the OLS estimator is given by
    \begin{align*}
        \hat{w}_n&=\rbr{\sum_{t=1}^{\tau_2} \hat{B}_n^{\top} A_{n, t} A_{n, t}^{\top} \hat{B}_n}^{-1} \sum_{t=1}^{\tau_2} \hat{B}_n^{\top} A_{n, t} r_{n, t}\\
        &=\rbr{\sum_{t=1}^{\tau_2} \hat{B}_n^{\top} A_{n, t} A_{n, t}^{\top} \hat{B}_n}^{-1} \sum_{t=1}^{\tau_2} \hat{B}_n^{\top} A_{n, t} \rbr{A_{n, t}^{\top} B w_n + \eta_{n, t}} \\
        &=\frac{m}{\tau_2 \lambda_0^2} \sum_{t=1}^{\tau_2} \hat{B}_n^{\top} A_{n, t} \rbr{A_{n, t}^{\top} B w_n + \eta_{n, t}}\\
        &=\frac{m}{\tau_2 \lambda_0^2} \sum_{t=1}^{\tau_2} \hat{B}_n^{\top} A_{n, t} A_{n, t}^{\top} \rbr{\hat{B}_n\hat{B}_n^{\top}+\hat{B}_{n,\perp}\hat{B}^{\top}_{n,\perp}}B w_n + \frac{m}{\tau_2 \lambda_0^2} \sum_{t=1}^{\tau_2} \hat{B}_n^{\top} A_{n, t}\eta_{n, t} \tag{$\hat{B}_n\hat{B}_n^{\top}+\hat{B}_{n,\perp}\hat{B}^{\top}_{n,\perp}=I$}\\
        &=\hat{B}_n^{\top}B w_n + \frac{m}{\tau_2 \lambda_0^2} \sum_{t=1}^{\tau_2} \hat{B}_n^{\top} A_{n, t} A_{n, t}^{\top} \hat{B}_{n,\perp}\hat{B}^{\top}_{n,\perp}B w_n + \frac{m}{\tau_2 \lambda_0^2} \sum_{t=1}^{\tau_2} \hat{B}_n^{\top} A_{n, t}\eta_{n, t}\\
        &=\hat{B}_n^{\top}B w_n + \frac{m}{\tau_2 \lambda_0^2} \sum_{t=1}^{\tau_2} \hat{B}_n^{\top} A_{n, t}\eta_{n, t}, \tag{$A_{n, t}^{\top} \hat{B}_{n,\perp} = 0$}
    \end{align*}
    where the first equality uses the closed-form solution of OLS; the second equality is by the definition of $r_{n,t}$; and the other equalities follow from algebraic manipulations.
    
    Now, we have
    \begin{align*}
        \hat{\theta}_n - \theta_n &= \hat{B}_n\hat{w}_n - Bw_n\\
        &= \hat{B}_n\rbr{\hat{B}_n^{\top} B w_n + \frac{m}{\tau_2 \lambda_0^2} \sum_{t=1}^{\tau_2} \hat{B}_n^{\top} A_{n, t} \eta_{n, t}} - Bw_n\\
        &= \underbrace{\rbr{\hat{B}\hat{B}_n^{\top} B w_n- Bw}
        }_{=: s_1}
        + 
        \underbrace{\frac{m}{\tau_2 \lambda_0^2} \sum_{t=1}^{\tau_2} \hat{B}_n\hat{B}_n^{\top} A_{n, t} \eta_{n, t}}_{=:s_2}.
    \end{align*}

    For $s_1$, we have
    \begin{align*}
        \|s_1\|^2 &=  \|\hat{B}_n\hat{B}_n^{\top} B w_n- Bw_n\|^2\\
        &=  \|(I - \hat{B}_{n,\perp}\hat{B}_{n,\perp}^{\top}) B w_n- Bw_n\|^2\\
        &= \|\hat{B}_{n,\perp}\hat{B}_{n,\perp}^{\top} B w_n\|^2\\
        & \leq \|\hat{B}_{n,\perp}\|_{op}^2 \|\hat{B}_{n,\perp}^{\top} \theta_n\|^2 \\
        & \leq \|\hat{B}_{n,\perp}^{\top} \theta_n\|^2
    \end{align*}
    
    For $s_2$, we have
    \begin{align*}
        \EE \|s_2\|^2 &= \EE \left \|\frac{m}{\tau_2 \lambda_0^2} \sum_{t=1}^{\tau_2} \hat{B}_n\hat{B}_n^{\top} A_{n, t} \eta_{n, t} \right \|^2\\
        &= \frac{m^2}{\tau_2^2 \lambda_0^4} \EE \sbr{\sum_{t=1}^{\tau_2} \rbr{\hat{B}_n\hat{B}_n^{\top} A_{n, t}  \eta_{n, t}}^{\top}\hat{B}_n\hat{B}_n^{\top} A_{n, t}  \eta_{n, t}} \\
        &= \frac{m^2}{\tau_2^2 \lambda_0^4} \EE \|\eta_{n, t}\|^2 \sum_{t=1}^{\tau_2} A_{n,t}^{\top} \hat{B}_n \hat{B}_n^{\top} A_{n, t}\\
        &= \frac{m^2}{\tau_2^2 \lambda_0^4} \EE \|\eta_{n, t}\|^2 \sum_{t=1}^{\tau_2} A_{n,t}^{\top} \rbr{I_d - \hat{B}_{n, \perp} \hat{B}_{n, \perp}^{\top}} A_{n, t}\tag{$\hat{B}_n\hat{B}_n^{\top}+\hat{B}_{n,\perp}\hat{B}^{\top}_{n,\perp}=I$}\\
        &= \frac{m^2}{\tau_2^2 \lambda_0^4} \EE \|\eta_{n, t}\|^2 \sum_{t=1}^{\tau_2} A_{n,t}^{\top} A_{n, t} \tag{$A_{n, t}^{\top}\hat{B}_{n, \perp} = 0$}\\
        &= \frac{m^2}{\tau_2^2 \lambda_0^4} \EE \|\eta_{n, t}\|^2 \tau_2 \lambda_0^2 \\
        &= \frac{m^2}{\tau_2 \lambda_0^2} \EE \|\eta_{n, t}\|^2 \\
        &\leq \frac{m^2}{\tau_2 \lambda_0^2}. \tag{$\eta_{n, t}$ is 1-sub-Gaussian noise assumption}
    \end{align*}

    Hence,
    \begin{align*}
        \EE \| \hat{\theta}_n - \theta_n \|^2 &\leq \EE \| s_1\|^2 + \EE \| s_2\|^2 \tag{Triangle inequality}\\
        &\leq \frac{m^2}{\tau_2 \lambda_0^2} + \|\hat{B}_{n,\perp}^{\top} \theta_n\|^2. \tag*{\qedhere} 
    \end{align*}
\end{proof}

\section{Proof of lemma \ref{lem:PMA1}: Regret of the subspace selection game}
\label{appendix:PMA1}

\costbound*

\begin{proof}
    
    Recall the guarantee of EWA from~\cite{freund1997decision}:
    \begin{theorem}[\cite{freund1997decision}]
    \label{thm:hedge}
    For any sequence of loss vectors $\ell_1, \cdots, \ell_T$ and the initial weights of the experts are $w_1^i = 1/n$ for all $i \in [n]$ and $\gamma \in (0,1)$, the EWA algorithm
    with learning rate $\eta = -\ln(1-\gamma)$
    generates $\cbr{p_t}_{t=1}^T$ such that its expected loss is bounded by
    \[
    \sum_{t=1}^T \inner{\ell_t}{p_t}
    \leq 
    \frac{ -\log(1 - \gamma) }{\gamma} \ell_{t}(i)
    + 
    \frac{\ln n}{\gamma},
    \quad
    \forall i = 1, \ldots, n;
    \]
    specifically, if at each round we choose $i_t \sim p_t$, 
    \[
    \EE\sbr{ \sum_{t=1}^T \ell_t(i_t) }
    \leq  -\frac{\ln(1 - \gamma) }{\gamma} \cdot \EE \sbr{  \sum_{t=1}^T \ell_t(i)  } + \frac{\ln n}{\gamma}, \quad
    \forall i = 1, \ldots, n.
    \]
    \end{theorem}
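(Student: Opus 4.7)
The plan is to prove Theorem~\ref{thm:hedge} via the standard potential-function argument that goes back to Littlestone–Warmuth and Freund–Schapire, adapted to the multiplicative weights form with step $(1-\gamma)^{\ell_t(i)}$. I will assume (as is implicit in the statement, since otherwise $(1-\gamma)^{\ell_t(i)}$ is not well-behaved) that each coordinate of $\ell_t$ lies in $[0,1]$; in our application the loss $\ell_n(B)$ is indeed a $\{0, \tfrac{\Cgood}{\Cbad}, 1\}$-valued quantity after the shift/scaling, so this is without loss.

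First I would introduce the potential $W_t := \sum_{i=1}^n w_t^i$ with unnormalized weights $w_1^i = 1$ (so $p_t^i = w_t^i/W_t$), and the EWA update $w_{t+1}^i = w_t^i \cdot (1-\gamma)^{\ell_t(i)}$. The key inequality is the convex upper bound $(1-\gamma)^{x} \le 1 - \gamma x$ valid for $x \in [0,1]$ and $\gamma \in (0,1)$. Applied coordinate-wise, this gives the single-round potential bound
\begin{align*}
W_{t+1} = \sum_i w_t^i (1-\gamma)^{\ell_t(i)} \le \sum_i w_t^i \bigl(1 - \gamma \ell_t(i)\bigr) = W_t \bigl(1 - \gamma \langle \ell_t, p_t\rangle \bigr).
\end{align*}
Iterating and using $1 - x \le e^{-x}$ yields the global upper bound
\begin{align*}
W_{T+1} \le W_1 \prod_{t=1}^T \bigl(1 - \gamma \langle \ell_t, p_t\rangle\bigr) \le n \exp\!\Bigl(-\gamma \sum_{t=1}^T \langle \ell_t, p_t\rangle\Bigr).
\end{align*}

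Next I would establish the matching lower bound by isolating any single expert $i$: since all weights are nonnegative,
\begin{align*}
W_{T+1} \;\ge\; w_{T+1}^i \;=\; (1-\gamma)^{\sum_{t=1}^T \ell_t(i)}.
\end{align*}
Combining the two bounds, taking $\log$, and dividing through by $\gamma > 0$ gives
\begin{align*}
\sum_{t=1}^T \langle \ell_t, p_t\rangle \;\le\; \frac{-\log(1-\gamma)}{\gamma} \sum_{t=1}^T \ell_t(i) + \frac{\ln n}{\gamma},
\end{align*}
which is the first (deterministic) claim. For the second claim, the tower rule gives $\mathbb{E}[\ell_t(i_t) \mid p_t] = \langle \ell_t, p_t\rangle$; summing and taking a full expectation, together with Fubini on the right-hand side (for any fixed benchmark $i$), yields the expected-loss version.

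The main subtlety is not algebraic but conceptual: the one-line convex bound $(1-\gamma)^{\ell_t(i)} \le 1 - \gamma \ell_t(i)$ is exactly where the range assumption $\ell_t(i) \in [0,1]$ is used, and this is why in Section~\ref{sec:alg} the losses are pre-scaled by $1/\Cbad$ before being fed to EWA; I would make this normalization explicit at the start of the proof to justify invoking the inequality. Beyond that, the remaining steps are purely mechanical: the telescoping of $W_t$, the single-expert lower bound, and algebraic rearrangement.
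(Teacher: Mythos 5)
The paper does not actually prove this statement --- it is imported verbatim as a known guarantee of Hedge from \cite{freund1997decision} --- and your potential-function argument (chord bound $(1-\gamma)^x \le 1-\gamma x$ on $[0,1]$, telescoping $W_t$, single-expert lower bound, tower rule) is precisely the standard proof that the citation relies on; it is correct, including your observation that the range restriction $\ell_t(i)\in[0,1]$ is what the $1/\Cbad$ rescaling in Algorithm~\ref{alg:PMA_full} is there to guarantee. One immaterial slip: the realized losses take values in $\{0,\,1-\Cgood/\Cbad\}$ (times $Z_n$), not $\{0,\Cgood/\Cbad,1\}$, but they still lie in $[0,1]$ so the argument goes through unchanged.
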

    
    Applying Theorem~\ref{thm:hedge} with 
    $T = N$, 
    expert set $\Ecal^{\varepsilon}$, $i_t = \hat{B}_n$,
    loss functions
    $\ell_n(B) = 
    \frac{p}{\Cbad} \sbr{\tilde{C}_n(B)\frac{Z_n}{p} - \Cgood \frac{Z_n}{p}}, n \in [N], B \in \Ecal^\epsilon$, 
    and the baseline expert $i = B_\varepsilon$, 
    we get: 
    \[
    \EE \sbr{ \sum_{n=1}^N \ell_n(\hat{B}_n) }
    \leq 
    -\frac{\ln(1 - \gamma) }{\gamma} \cdot \EE  \sbr{ \sum_{n=1}^N \ell_t(B_\varepsilon) }
    + \frac{\ln |\Ecal^\varepsilon|}{\gamma}.
    \]

    Using the basic fact that $\frac{-\ln(1-x)}{x} \leq 1 + x$ for $x \in (0, \frac12]$
    we have that, for $\gamma \in (0,\frac12]$,
    \[
    \EE \sbr{ \sum_{n=1}^N \ell_n(\hat{B}_n) } \leq (1+\gamma)\EE \sbr{\sum_{n=1}^N \ell_n(B_{\varepsilon})}  + \frac{\log |\Ecal^\varepsilon|}{\gamma}.
    \]

Define $
F_n = \cbr{Z_n = 0 \vee (Z_n = 1 \wedge \| \hat{\theta}_n - \theta_n \| \leq \alpha )}
$ and 
$F = \cap_{n=1}^N F_n$. Then, when all of our estimations $\hat{\theta}_n$ are accurate, we have: $\II\rbr{F} \sum_{n=1}^N \ell_n(B_{\varepsilon}) = 0$ for $\ell_n(B) = \frac{p}{\Cbad} \sbr{\bar{C}_n(B) - \Cgood \frac{Z_n}{p}}$ and $B_\varepsilon \in \Ecal^{\varepsilon}$.

By the definition of $F_n$, we have: $F_n^c = \cbr{Z_n = 1 \wedge \| \hat{\theta}_n - \theta_n \| > \alpha }$. Thus, by Lemma~\ref{lem:exr_procedure}, $P(F_n^c) = P(\| \hat{\theta}_n - \theta_n \| > \alpha \mid Z_n=1)P(Z_n=1) \leq p\delta$, 
and therefore $P(F) \geq 1-\sum_{n=1}^N P(F_n^c) = 1 - Np\delta$. 
Since Algorithm~\ref{alg:PMA} chooses the learning rate for EWA as
$\eta = \log(2)$, thus, $\gamma=1-\exp(-\eta) = 0.5$, we have

\begin{align*}
    \EE \sbr{\sum_{n=1}^N \ell_n(\hat{B}_n) - \sum_{n=1}^N \ell_n(B_\varepsilon)}
    &\leq \EE \sbr{\gamma\sum_{n=1}^N \ell_n(B_\varepsilon) + \frac{\log |\Ecal^{\varepsilon}|}{\gamma}}\\
    &= \EE \sbr{\II(F)\gamma\sum_{n=1}^N \ell_n(B_\varepsilon)+\II(F^c)\gamma\sum_{n=1}^N \ell_n(B_\varepsilon)+ \frac{\log |\Ecal^{\varepsilon}|}{\gamma}}\\
    &= \EE \sbr{0+\II(F^c)\gamma\sum_{n=1}^N \ell_n(B_\varepsilon)+ \frac{\log |\Ecal^{\varepsilon}|}{\gamma}}\\
    &\leq \EE \sbr{\II(F^c)\gamma N + \frac{\log |\Ecal^{\varepsilon}|}{\gamma}}\\
    &\leq N^2\gamma\delta + \frac{\log |\Ecal^{\varepsilon}|}{\gamma} \tag{$p\leq 1$}\\
    &\leq O \rbr{\log |\Ecal^{\varepsilon}|} \tag{$\delta \leq \frac{4\log |\Ecal^{\varepsilon}|}{N^2}$ and $\gamma = 1/2$}.
\end{align*}

    Then,
        \begin{align*}
            \EE \sbr{\sum_{n=1}^N \ell_n(\hat{B}_n) - \sum_{n=1}^N \ell_n(B_\varepsilon)} &\leq O(\log |\Ecal^{\varepsilon}|)\\
            \implies \EE \sbr{\sum_{n=1}^N \ell_n(\hat{B}_n)} - \EE \sbr{\II(F)\sum_{n=1}^N \ell_n(B_\varepsilon) + \II(F^c)\sum_{n=1}^N \ell_n(B_\varepsilon)} &\leq O(\log |\Ecal^{\varepsilon}|)\\
            \implies \EE \sbr{\sum_{n=1}^N \ell_n(\hat{B}_n)} - \EE \sbr{\II(F^c)\sum_{n=1}^N \ell_n(B_\varepsilon)} &\leq O(\log |\Ecal^{\varepsilon}|)\\
            \implies \EE \sbr{\sum_{n=1}^N \ell_n(\hat{B}_n)} &\leq O(\log |\Ecal^{\varepsilon}|) + N^2 \delta\tag{$p\leq 1$}\\
            \implies \sum_{n=1}^N \frac{p}{\Cbad} \EE \sbr{\bar{C}_n(\hat{B}_n) - \Cgood \frac{Z_n}{p}}&\leq O(\log |\Ecal^{\varepsilon}|)\tag{$N^2 \delta \leq O(\log |\Ecal^{\varepsilon}|)$}\\
            \implies \sum_{n=1}^N  \EE \sbr{\bar{C}_n(\hat{B}_n) } - N \Cgood &\leq O\rbr{\frac{\Cbad\log |\Ecal^{\varepsilon}|}{p}}\\
            \implies \sum_{n=1}^N  \EE \sbr{\bar{C}_n(\hat{B}_n)}&\leq O\rbr{N \Cgood + \frac{\Cbad\log |\Ecal^{\varepsilon}|}{p}}.
        \end{align*}

Thus,

\begin{align*}
    \EE\sbr{ \bar{C}_n(\hat{B}_n)  }&= \EE\sbr{ \tilde{C}_n(\hat{B}_n) \cdot \frac{Z_n}{p} }\\
    &\geq \EE\sbr{ \tilde{C}_n(\hat{B}_n) \cdot \II(F_n) \cdot \frac{Z_n}{p} }\\
    &\geq\EE\sbr{ C_n(\hat{B}_n) \cdot (1- \II(F_n^c)) \cdot \frac{Z_n}{p} }\tag{$\tilde{C}_n(B) \II( F_n ) \geq C_n(B) \II(F_n)$  in Lemma \ref{lem:C_n_upper_bound}}\\
    &= C_n(\hat{B}_n) - \EE\sbr{C_n(\hat{B}_n)\cdot  \II(F_n^c) \cdot \frac{Z_n}{p} }\\
    &\geq C_n(\hat{B}_n) - \frac{\Cbad}{N^2},
\end{align*}
where, for the last inequality, we use the observation that 
$\EE\sbr{C_n(\hat{B}_n)\cdot  \II(F_n^c) \cdot \frac{Z_n}{p} }
\leq 
\frac{\Cbad}{p} \EE\sbr{ \II(F_n^c) }
\leq 
\frac{\Cbad}{p} P(F_n^c)
\leq \Cbad \delta \leq \frac{\Cbad}{N^2}$, since $P(F_n^c) \leq p\delta \leq \frac{p}{N^2}$.

Hence,
\begin{align*}
    \sum_{n=1}^N  \EE \sbr{\bar{C}_n(\hat{B}_n)}&\leq O\rbr{N \Cgood + \frac{\Cbad\log |\Ecal^{\varepsilon}|}{p}}\\
    \implies \sum_{n=1}^N  \EE \sbr{C_n(\hat{B}_n)}&\leq O\rbr{N \Cgood + \frac{\Cbad\log |\Ecal^{\varepsilon}|}{p}+N \cdot \frac{ \Cbad}{N^2}}\\
    &= O\rbr{N \Cgood + \frac{\Cbad\log |\Ecal^{\varepsilon}|}{p}}. \tag{$1 \leq O(\log |\Ecal^{\varepsilon}|)$ and $\frac1N \leq 1 \leq \frac{1}{p}$}
\end{align*}

Substituting $\Cgood$ and $\Cbad$ in Equation \eqref{eqn:C_n_ideal} to complete the proof.
\end{proof}

\section{Theorem \ref{thm:meta-regret}: meta-regret guarantee}
\label{appendix:main-thm-pf}

\regret*

\begin{proof}[Remainder of the Proof of Theorem~\ref{thm:meta-regret}]

Recall that in the proof sketch of Theorem~\ref{thm:meta-regret} (Section~\ref{sec:guarantee}), we have proved that

\begin{align*}
    R_\tau &\leq
    \tilde{O} \rbr{ Np\tau_1 + N \tau \frac{d^2}{\tau_1} + \frac{\tau d m}{p} + N \tau_2 + N \tau \frac{m^2}{\tau_2}  }.
\end{align*}
Now, by the choice of $\tau_2 = m \cdot \lfloor \sqrt{\tau} \rfloor$, 
\begin{align*}
    R_\tau &\leq \tilde{O}\rbr{Nm \sqrt{\tau} + N \tau \frac{d^2}{\tau_1} +  N p \tau_1 + \frac{\tau dm}{p}}.
\end{align*}

We want to tune the parameters $p, \tau_1$ to minimize the meta-regret subjected to the constraint: $p \in [0,1]$ and $\tau_1 \in [0, \tau]$.

The meta-regret is:
\begin{align*}
    R_\tau &\leq \tilde{O}\rbr{Nm \sqrt{\tau} + N \tau \frac{d^2}{\tau_1} +  N p \tau_1 + \frac{\tau dm}{p}}\\
    &= \tilde{O}\rbr{Nm \sqrt{\tau} + N d \sqrt{\tau p} +  \frac{\tau dm}{p} + Nd^2}\tag{Choose 
    $\tau_1 = d \cdot \left \lfloor  \min \rbr{d \sqrt{\frac{\tau}{p}}, \tau} / d \right \rfloor $}\\
    &= \tilde{O}\rbr{Nm \sqrt{\tau} + N^{\frac{2}{3}} \tau^{\frac{2}{3}} d m^{\frac13} + Nd^2 + \tau m d}\tag{Choose $p = \min \rbr{\rbr{\frac{2m\sqrt{\tau}}{N}}^{\frac{2}{3}},1}$}.
\end{align*}
\end{proof}

\section{Additional experiment result}

\subsection{Adversarial environment for \SeqRepL's deterministic exploration schedule}

\begin{figure}[H]
\begin{subfigure}{.33\textwidth}
  \centering
  \includegraphics[width=\linewidth]{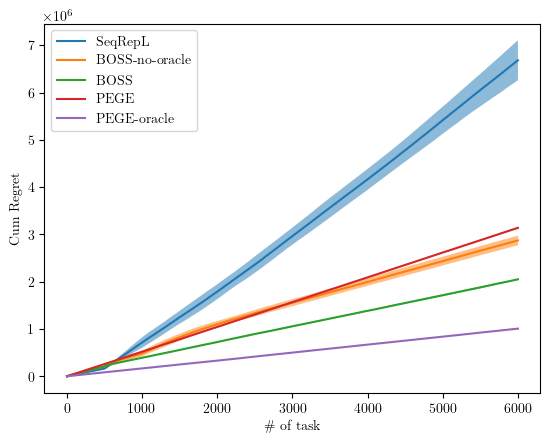}
\end{subfigure}%
\begin{subfigure}{.33\textwidth}
  \centering
  \includegraphics[width=\linewidth]{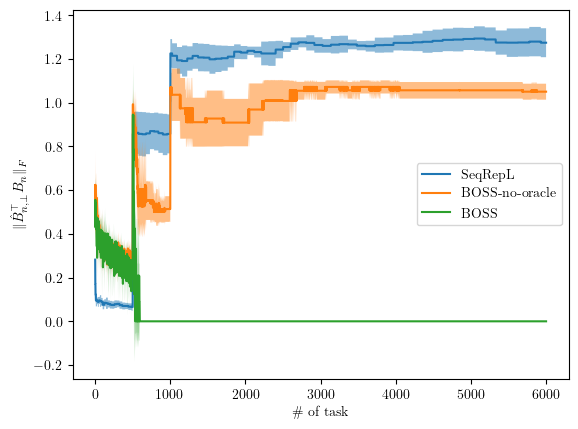}
\end{subfigure}%
\begin{subfigure}{.33\textwidth}
  \centering
  \includegraphics[width=\linewidth]{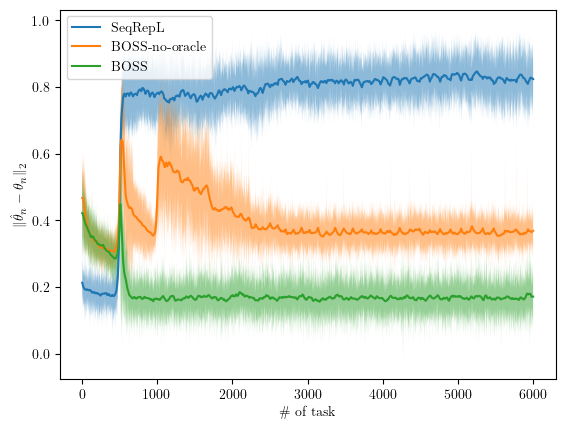}
\end{subfigure}
\caption{Comparing the cumulative regret of \boss and other baselines. The setting is $(N, \tau, d, m) = (6000, 2000, 10, 3)$ and $\|\theta_n\|_2 \in [0.8, 1] \; \forall n \in [N]$ chosen uniformly at random from this interval. \SeqRepL, \boss, and \bossNoOracle 
uses the same hyperparameters $\tau_1=400, \tau_2=50$. The environment only reveals a new subspace dimension at tasks 1, 501, and 1001, and only reveals the same dimension at \cite{qin2022non}'s deterministic exploration schedule.
}
\end{figure}

\subsection{When the Task Diversity assumption is satisfied}

\begin{figure}[H]
\begin{subfigure}{.33\textwidth}
  \centering
  \includegraphics[width=\linewidth]{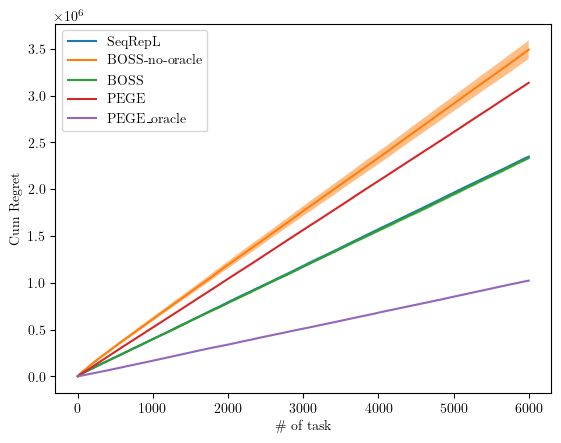}
\end{subfigure}%
\begin{subfigure}{.33\textwidth}
  \centering
  \includegraphics[width=\linewidth]{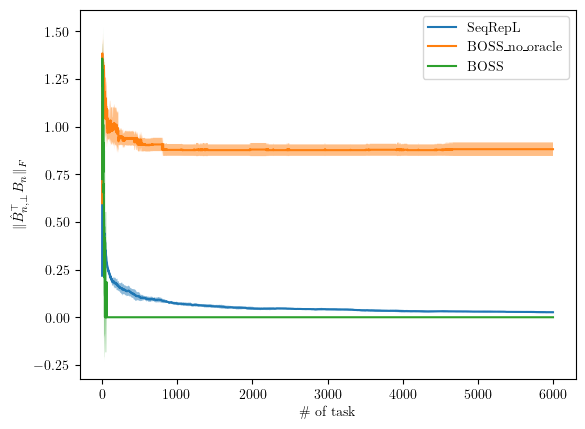}
\end{subfigure}%
\begin{subfigure}{.33\textwidth}
  \centering
  \includegraphics[width=\linewidth]{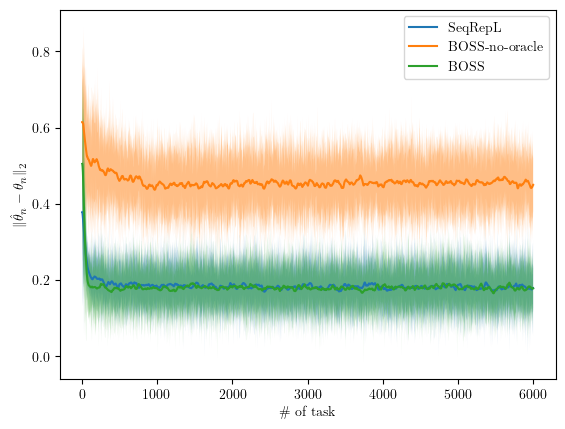}
\end{subfigure}
\caption{Comparing the cumulative regret of \boss and other baselines. The setting is $(N, \tau, d, m) = (6000, 2000, 10, 3)$ and $\|\theta_n\|_2 \in [0.8, 1] \; \forall n \in [N]$. \SeqRepL, \boss, and \bossNoOracle uses the same hyperparameters $\tau_1=1000, \tau_2=300$.
The task diversity assumption is satisfied: each $\theta_n$ is generated by a linear combinations of the columns in $B_n$ -- the subspace spanning $\theta_1, \cdots, \theta_{n-1}$. The performance of \SeqRepL and \boss is almost identical in the left figure.}
\label{fig:exp_results_with_TD}
\end{figure}

\end{document}